\documentclass{article} % For LaTeX2e
\usepackage{iclr2027_conference,times}

\usepackage{amsmath,amsfonts,bm}

\def\eqref#1{equation~\ref{#1}}
\def\1{\bm{1}}

\DeclareMathAlphabet{\mathsfit}{\encodingdefault}{\sfdefault}{m}{sl}
\SetMathAlphabet{\mathsfit}{bold}{\encodingdefault}{\sfdefault}{bx}{n}

\iclrfinalcopy % arXiv preprint; do not mark as accepted

\usepackage{hyperref}
\usepackage{url}
\usepackage{amsthm}
\newtheorem{proposition}{Proposition}
\newtheorem{theorem}{Theorem}
\newtheorem{lemma}{Lemma}
\newtheorem{remark}{Remark}
\usepackage{booktabs}
\usepackage{multirow}
\usepackage{graphicx}
\usepackage{amsmath}
\usepackage{amsthm}
\usepackage{enumitem}
\usepackage{tabularx}
\usepackage{amssymb}
\usepackage{nicefrac}
\usepackage{tikz}
\usetikzlibrary{arrows.meta,positioning,calc}

\newcommand{\kerL}{\ker L}  \newcommand{\dg}{\bar\nabla}  \newcommand{\rmech}{r_{\mathrm{mech}}}
\newcommand{\Pimodel}{\Pi_{\mathrm{model}}}  \newcommand{\Pistar}{\Pi^\star}

\hypersetup{hidelinks}

\title{Poisson-GENERIC Neural Operators: Exact Metriplectic Structure in Function Space via Casimir Entropies}

\author{Jason Sulskis\thanks{Also with the Georgia Tech Research Institute, Atlanta, GA, USA.} \\
Department of Computer Science \\
University of Illinois at Chicago \\
Chicago, IL 60607, USA \\
\texttt{jsulsk1@uic.edu} \\
\And
Sathya Ravi \\
Department of Computer Science \\
University of Illinois at Chicago \\
Chicago, IL 60607, USA \\
\texttt{sathya@uic.edu} \\
}

\begin{document}

\maketitle
\lhead{Preprint. Under review.}   % overrides "Published as a conference paper at ICLR 2027"

\begin{abstract}
Every existing construction of thermodynamically consistent neural operators imposes the two GENERIC degeneracy conditions on a learned energy and entropy, but degeneracy is always obtained by projecting the reversible operator onto the complement of the entropy gradient. The consequence is an operator that depends on state and a Jacobi identity that is forfeited, so the resulting structure is metriplectic-degenerate rather than metriplectic. Our approach closes this gap by taking degeneracy from the same place GENERIC itself takes it. For nonlinear transport, the reversible operator $L$ is the compatible Lie--Poisson pencil $\alpha D+\lambda(uD+Du)$; otherwise it is a constant Fourier multiplier, which is trivially Poisson. On an augmented state $(u,s)$ carrying a latent entropy density, the entropy $S=\int s$ is a Casimir of $L$, so $L\,\delta S/\delta z=0$ holds identically, with no projection required. A fixed mechanical quadratic, a learned gauge-free potential, and a convex internal energy together make up the energy. The friction operator $M=AA^{\!\top}$ satisfies $M\,\delta E/\delta z=0$ pointwise, and its Onsager flux/local structure supports diffusion and damping without a separately parameterized skew transport generator. For any parameters, skewness, positivity, and both degeneracies hold algebraically; the continuum pencil satisfies Jacobi, and the discrete Jacobi residual is at machine precision on the tested alias-free inputs. Heat conduction and damped waves have closed-form friction operators that are representable exactly, the first and second laws bound physical energy under the applicable coercivity assumptions, and exact discrete first and second laws follow from a discrete-gradient integrator. The model was tested on four PDEs in one and two dimensions with three backbones (FNO, Transolver, CNO): with the same backbone it wins 61 of 72 seed-level comparisons against unconstrained baselines, learns the exact transport and wave symbols, reproduces the true dissipation rate to within 13\% on heat and Burgers, and dissipates nothing on advection. Whereas a constant-$L$ ablation isolates the price of exact Jacobi as the loss of Burgers, a learned-entropy ablation injects energy on every reversible--irreversible problem.
\end{abstract}

% =====================================================================
%  Poisson-GENERIC Neural Operators -- Introduction and Related Work
%  Draft v0. Numbers marked \TODO{} are filled from results.csv.
% =====================================================================
\newcommand{\TODO}[1]{\textcolor{red}{[#1]}}

\newcommand{\ip}[2]{\langle #1, #2\rangle}

% =====================================================================
%  Section 1: Introduction  (Poisson-GENERIC Neural Operators, ICLR 2027)
%  Numbers are from the final four-model grid (results.csv, v10 notebook).
% =====================================================================

\section{Introduction}
\label{sec:intro}

\paragraph{Neural operators do not know anything about energy or entropy.} When solving PDEs with neural operator networks, there is nothing inherent in the loss function that tells the system to conserve energy or produce entropy. The errors that accumulate over a long rollout are therefore physical as well as numerical: the total energy of a heat equation surrogate grows, a wave surrogate gains amplitude, a Burgers surrogate stops dissipating and blows up. To address this, we employ the GENERIC formalism of non-equilibrium thermodynamics \citep{grmela1997dynamics,ottinger2005beyond}.

Any closed system with an energy $E$ and an entropy $S$ evolves as
\begin{equation}
  \partial_t z = L\,\frac{\delta E}{\delta z} + M\,\frac{\delta S}{\delta z},
  \qquad L\,\frac{\delta S}{\delta z} = 0,\qquad M\,\frac{\delta E}{\delta z} = 0,
  \label{eq:generic}
\end{equation}
with $L$ a Poisson operator and $M$ symmetric positive semi-definite. The two degeneracy conditions make the first and second laws hold, and the Jacobi identity on $L$ makes the reversible part a true Hamiltonian flow, rather than merely a skew flow. Applying this formalism makes the neural operators fully thermodynamically consistent---that is, they are guaranteed to obey both the first and second laws of thermodynamics.

\paragraph{Building a thermodynamically consistent neural operator.} GENERIC-FNO \citep{sulskis2026generic} carried the two degeneracy conditions into function space for the first time. It learns $E$ and $S$ as neural operators and sandwiches diagonal Fourier multipliers between rank-one projections, $L = (I-P_S)D_L(I-P_S)$, so that $L\,\delta S/\delta z = 0$ holds identically. However, this was done in a limited way, as the projection makes $L$ depend on the state through $\delta S/\delta z$. Additionally, that work neither imposed nor checked the Jacobi identity for the projected bracket, so its structure was metriplectic-degenerate rather than a full Poisson--metriplectic pair. This same limitation runs through all the current finite-dimensional learned-bracket literature \citep{lee2021machine,zhang2022gfinns,gruber2023reversible,gruber2025efficiently}, where Jacobi is guaranteed only for a state-independent $L$.

\paragraph{Making it Poisson.} The projected bracket has a normalized Jacobi residual of $10^{-3}$ to $1$ on random test functionals, whereas our constant Fourier-multiplier operator achieves $\mathcal{O}(10^{-16})$. It satisfies the Jacobi identity exactly because $\partial L=0$, and degeneracy follows from constructing $S$ as a Casimir. This also reveals a limitation of the single-field formulation: for heat, Burgers, and advection, the Casimirs of a constant multiplier depend only on untouched modes and cannot provide an entropy that drives diffusion. This reflects the absence of nontrivial Poisson--Casimir structure in these equations, not a flaw in the construction. We therefore augment the state with a latent entropy density $s$, define $S=\int s$ with $\delta S/\delta z=(0,1)$, and use an energy comprising a mechanical quadratic term, a learned potential, and a learned convex internal energy. Dissipation acts through $M=AA^{\!\top}$; its Onsager structure permits diffusion and damping rather than an independent reversible transport generator, while the degeneracy condition $M\,\delta E/\delta z=0$ is enforced pointwise by removing the local energy flux. This parameterization contains the exact friction operators for heat conduction and the damped wave (Proposition~\ref{prop:represent}) and, together with energy conservation and a lower-bounded internal energy, bounds mechanical energy under the coercivity conditions of Theorem~\ref{thm:bound}, which a freely learned energy does not. Nonlinear transport is provided by the compatible Lie--Poisson term $\lambda(u\partial+\partial u)$, whose learned Burgers coefficient converges to $-\Delta t/3$. Finally, constant $L$ and quadratic mechanical energy enable exact mode-wise integration of the linear reversible flow, while a discrete-gradient step conserves $E$ and produces $S$ exactly in discrete time.

Our main contributions are:
\begin{itemize}
  \item A neural operator with full metriplectic structure in function space: a continuum Poisson $L$ (with discrete Jacobi tested on alias-free inputs), a Casimir entropy on an augmented thermodynamic state, and an Onsager-structured friction operator, with skewness, symmetry, positivity, both degeneracies holding exactly for any parameters, and the continuum Jacobi identity with band-limited numerical verification (\S\ref{sec:method}, \S\ref{sec:verification}).
  \item Exact representability of the closed-form GENERIC friction operators for heat conduction and the damped wave, and a bound on physical energy under the scalar curvature condition or the additional wave coercivity conditions (Proposition~\ref{prop:represent}, Theorem~\ref{thm:bound}).
  \item A discrete-gradient integrator that gives exact discrete first and second laws, including with the state-dependent Lie--Poisson term, and an exact unitary integration of the linear reversible flow (\S\ref{sec:integrator}).
  \item A study across four PDEs, two dimensions and three backbones that reports the wins, the systematic loss on heat, the dissipation calibration against ground truth, and the two ablations that separate a physical energy from a learned one and exact Jacobi from nonlinear transport (\S\ref{sec:experiments}).
\end{itemize}

% =====================================================================
%  Section 2: Related Work  (Poisson-GENERIC Neural Operators, ICLR 2027)
%  Updated to the model of record: physical energy, S = int s, Lie-Poisson
%  pencil, Onsager-structured friction operator, split + DG integration.
%  Bib keys used here that may be new: magri1978simple, romero2009thermodynamically,
%  hochbruck2010exponential, liu2024clawno, helwig2023gfno, onsager1931reciprocal,
%  degroot1962nonequilibrium, morrison1998hamiltonian, strang1968construction
%  (entries at the bottom of this file).
% =====================================================================

\section{Related Work}
\label{sec:related}

\paragraph{Neural operators.} Neural operators learn discretization-consistent maps between function spaces \citep{kovachki2023neural}. FNO parameterizes kernels spectrally \citep{li2021fno}, DeepONet uses branch--trunk factorization \citep{lu2021deeponet}, and later architectures introduced wavelet and Laplace kernels \citep{tripura2023wavelet,cao2024lno}, alias-free convolutional operators \citep{raonic2023convolutional}, and transformers for general geometries, including Transolver's physics attention \citep{wu2024transolver,luo2025transolverpp} and universal physics transformers \citep{alkin2024upt}. None encodes conservation or dissipation laws. As in \citet{sulskis2026generic}, we use FNO, Transolver, and a CNO-style backbone so controlled comparisons change only the thermodynamic structure. Here the backbone supplies only the entropy encoder, friction coefficients, and unconstrained baseline and, unlike GENERIC-FNO, maps fields to fields rather than to differentiable scalar functionals.

\paragraph{Structure-preserving learning in finite dimensions.} Hamiltonian/Lagrangian networks and symplectic integrators learn reversible dynamics \citep{greydanus2019hamiltonian,cranmer2020lagrangian,jin2020sympnets}, while metriplectic methods address thermodynamic dynamics \citep{hernandez2021structure,yu2021onsagernet,lee2021machine,zhang2022gfinns,gruber2023reversible,gruber2025efficiently}. They enforce skewness of $L$, positivity of $M$, and both degeneracies, but Jacobi only when $L$ is state-independent \citep{sulskis2026generic}. Poisson neural networks instead pull back a constant bracket through a learned symplectomorphism via Darboux--Lie \citep{jin2022poisson}; our constant $L$ is the trivial pullback, with a function-space extension sketched in \S\ref{sec:pullback}. Alternatively, compatible Hamiltonian operators remain Hamiltonian under linear combination, as in KdV's bi-Hamiltonian structure \citep{magri1978simple,olver1993applications}. We exploit this with the scalar-field pencil $\alpha D+\lambda(uD+Du)$, compatible for all parameters; on Burgers, learned $\lambda$ converges to the coefficient making $-\partial_x(u^2/2)$ Hamiltonian under quadratic energy. To our knowledge, compatible pencils have not previously been used for learned brackets. Unlike dimension-scaling matrices, our $L$ and $M$ use spectral multipliers and pointwise fields on a fixed band, allowing the same parameters across one and two dimensions and resolutions.

\paragraph{Structure-preserving neural operators.} Function-space structure has been imposed through soft physics residuals \citep{raissi2019physics}, built-in conserved quantities \citep{liu2024clawno}, and symmetries \citep{helwig2023gfno}. GENERIC-FNO \citep{sulskis2026generic} enforced both degeneracies for learned energy and entropy using rank-one projections around diagonal Fourier multipliers, introduced the gauge-invariant dissipation diagnostic $r_{\mathrm{mech}}$, and transferred its guarantees across $4\times$ super-resolution. We retain its testbed and diagnostic, measure the Jacobi residual of its projected operator, and replace its learned functionals and operators with fixed mechanical plus learned internal energy, Casimir entropy, and new operators. Related graph/mesh metriplectic models \citep{tierz2025graph}, metriplectic conditional flow matching \citep{baheri2025metriplectic}, and Metriplector \citep{oprisa2026metriplector} are discrete, finite-dimensional, or latent-grid constructions. None enforces Jacobi for a state-dependent bracket, obtains degeneracy through a Casimir, or constrains friction beyond positivity.

\paragraph{Casimirs, internal variables, and friction matrices in GENERIC.} GENERIC supplies three ingredients. Field-theoretic entropy is a Poisson Casimir, and dissipative continua require mechanical variables augmented by internal energy or entropy density \citep{grmela1997dynamics,ottinger2005beyond}, rooted in metriplectic theory \citep{kaufman1984dissipative,morrison1986paradigm} and classical function-space brackets \citep{olver1993applications,marsden1999introduction,morrison1998hamiltonian}. Hydrodynamic friction has factored form $C\,D'\,C^{\!\top}$, depends on state through velocity gradients and temperature, and satisfies energy degeneracy through this dependence rather than projection \citep[Ch.~2]{ottinger2005beyond}, consistent with Onsager theory \citep{onsager1931reciprocal,degroot1962nonequilibrium}. Learning methods have instead worked in observed coordinates, learned $S$ freely, and imposed degeneracy per gradient direction. We instead use latent entropy density with an initial-state encoder, physical quadratic plus learned pointwise energy, and a friction parameterization containing the exact heat and damped-wave operators while, by derivative-order parity, permitting diffusion and damping without introducing an independent reversible transport generator. To our knowledge, latent entropy density and this parity constraint are new to operator learning; \S\ref{sec:experiments} ablates both.

\paragraph{Structure-preserving integration.} Discrete-gradient methods exactly conserve first integrals at any step size \citep{gonzalez1996time,mclachlan1999geometric,celledoni2012preserving}; metriplectic integrators additionally require discrete gradients respecting degeneracy \citep{romero2009thermodynamically,ottinger2018generic,gruber2025efficiently}. GENERIC-FNO used explicit Euler because higher-order integration broke exact discrete degeneracy. Here entropy is linear, so its discrete gradient is exact, while $\ker L$ contains every entropy direction at every state; hence the Gonzalez step preserves exact discrete first and second laws even with the state-dependent Lie--Poisson term, unlike brackets whose kernel moves with state. For the reversible part, a constant multiplier with quadratic mechanical energy yields an exactly integrable unitary map per mode, applied as exact linear half-steps around a first-order explicit remainder \citep{strang1968construction}. Although exponential and splitting methods for linear stiff or oscillatory systems are standard \citep{hochbruck2010exponential}, here the learned structure makes the exact flow available; an unconstrained operator must learn that map.

\newcommand{\PiK}{\Pi_{K}}

% =====================================================================
%  Section 3 (body): Method -- compact. Full definitions in Appendix A,
%  architecture and training in Appendix B, proofs in Appendix C.
%  Macros needed in the preamble: \ip, \vd, \kerL, \dg, \rmech, \Pimodel, \Pistar
%  Labels assumed elsewhere: thm:bound (restated Theorem 2), rem:gauge (the remark after it),
%  prop:represent (Proposition 6), sec:integrator (Appendix A.5).
% =====================================================================
\section{Method: Poisson-GENERIC Neural Operators}
\label{sec:method}

Figure~\ref{fig:architecture} shows the forward map for a single frame. Learning is confined to three components: an encoder that derives the latent entropy density from the first observed frame; two pointwise constitutive functions within the energy; and the pointwise coefficient field of the friction operator, along with a small number of operator symbols. All remaining elements---the entropy, the mechanical energy, the form of both operators, and the integration procedure---are specified by construction, and this is the source of the guarantees.

\begin{figure}[t]
\centering
\resizebox{\linewidth}{!}{% =====================================================================
%  Figure: Poisson-GENERIC forward map (one frame).  Requires in the preamble:
%  \usepackage{tikz} \usetikzlibrary{arrows.meta,positioning,calc,fit,backgrounds}
% =====================================================================
\begin{tikzpicture}[
  font=\small,
  >={Stealth[length=1.8mm]},
  box/.style={draw, rounded corners=2pt, inner sep=3pt, align=center, fill=white, minimum height=12mm},
  learned/.style={box, fill=blue!8},
  fixed/.style={box, fill=gray!10},
  op/.style={box, fill=orange!14},
  integ/.style={box, fill=green!10},
  lbl/.style={font=\scriptsize, inner sep=1pt, fill=white}
]
\def\xa{0} \def\xb{30} \def\xc{64} \def\xd{96} \def\xe{130} \def\xf{158}
% ---- state (left)
\node[fixed, text width=20mm] (u) at (\xa mm, 0) {observed\\field $u_t$};
\node[learned, text width=22mm, minimum height=10mm] (enc) at (\xa mm, -22mm) {encoder $\psi_\eta$\\{\scriptsize $s_0=\psi_\eta(u_0)$, first frame}};
\node[fixed, text width=20mm] (z) at (\xb mm, 0) {state\\$z_t=(u_t,s_t)$};
\draw[->] (u) -- (z);
\draw[->] (u) -- (enc);
\draw[->] (enc.east) -| node[lbl, pos=0.25, below] {$s_0$} (z.south);
% ---- functionals, gradients, operators
\node[learned, text width=30mm, minimum height=15mm] (E) at (\xc mm, 11mm) {energy $E_\theta[z]$\\[1pt]{\scriptsize $Q(u)+\int\varphi_\theta(u)+\int e_\theta(s)$}\\{\scriptsize $Q$ fixed; $\varphi_\theta,e_\theta$ pointwise}};
\node[fixed, text width=30mm, minimum height=15mm] (S) at (\xc mm, -11mm) {entropy $S[z]=\int s$\\[1pt]{\scriptsize a Casimir of $L$}};
\node[fixed, text width=22mm] (gE) at (\xd mm, 11mm) {$\delta E_\theta/\delta z$\\[1pt]{\scriptsize autodiff}};
\node[fixed, text width=22mm] (gS) at (\xd mm, -11mm) {$\delta S/\delta z$\\$=(0,\,1)$};
\node[op, text width=32mm, minimum height=15mm] (L) at (\xe mm, 11mm) {Poisson $L$\\[1pt]{\scriptsize $\alpha D+\lambda\,\mathcal P(uD\!+\!Du)\mathcal P$}\\{\scriptsize or $J\,\mathrm{diag}(D_d,D_d)$ (wave)}};
\node[op, text width=32mm, minimum height=15mm] (M) at (\xe mm, -11mm) {friction $M=AA^{\!\top}$\\[1pt]{\scriptsize $A=D_M\,P(x)\,\Gamma_\vartheta(z)$}\\{\scriptsize parity: no transport}};
\draw[->] (z.east) -- (E.west);
\draw[->] (z.east) -- (S.west);
\draw[->] (E) -- (gE);
\draw[->] (S) -- (gS);
\draw[->] (gE) -- (L);
\draw[->] (gS) -- (M);
\draw[->, dashed] (gE.south) |- ($(M.north west)+(7mm,0)$) node[lbl, pos=0.85, above left] {$P(x)\perp$ flux of $\delta E/\delta z$};
% ---- sum (right), integrator below, next state
\node[circle, draw, inner sep=1.5pt, fill=white] (plus) at (\xf mm, 0) {$+$};
\draw[->] (L.east) -- node[lbl, above, sloped, pos=0.5] {moves $u$} (plus);
\draw[->] (M.east) -- node[lbl, below, sloped, pos=0.5] {feeds $s$} (plus);
\node[integ, text width=38mm, minimum height=15mm] (int) at (\xf mm, -32mm) {time step\\[1pt]{\scriptsize split: exact unitary linear flow $+$ sub-cycled explicit remainder}\\{\scriptsize or discrete gradient: exact $E_\theta$, $S$}};
\draw[->] (plus) -- node[lbl, right] {$\partial_t z$} (int);
\node[fixed, text width=14mm, minimum height=10mm] (z1) at (\xd mm, -32mm) {$z_{t+1}$};
\draw[->] (int) -- (z1);
% ---- legend
\node[lbl, anchor=west, fill=none] at (\xa mm - 10mm, -36mm) {\tikz\node[learned, minimum width=5mm, minimum height=3.5mm, inner sep=0pt]{};\ learned \; \tikz\node[fixed, minimum width=5mm, minimum height=3.5mm, inner sep=0pt]{};\ fixed \; \tikz\node[op, minimum width=5mm, minimum height=3.5mm, inner sep=0pt]{};\ exact structure \; \tikz\node[integ, minimum width=5mm, minimum height=3.5mm, inner sep=0pt]{};\ integrator};
\end{tikzpicture}}
\caption{One step of the Poisson-GENERIC neural operator. The state \(z=(u,s)\) combines the observed field with a latent entropy density, with \(S=\int s\) a Casimir of \(L\). The energy combines a fixed mechanical quadratic with learned pointwise terms. The Poisson operator governs the reversible dynamics, while the projected \(AA^{\!\top}\) friction operator enforces \(M\,\delta E/\delta z=0\) and separates dissipative dynamics from the reversible transport generator. The channels are advanced by a split step, or by a discrete-gradient step that conserves \(E_\theta\) and produces \(S\) exactly.}
\label{fig:architecture}
\end{figure}

\paragraph{State, energy, entropy.} The pair $z=(u,s)$ defines the state, where $u$ is the observed mechanical field and $s$ is a latent entropy density. For heat, advection and Burgers, $u$ carries $p=1$ channel; for the damped wave, $u=(q,v)$. Being unobserved, $s$ is trained solely through its influence on $u$, with a small encoder providing its initialization, $s_0=\psi_\eta(u_0)$. Nothing is learned in the entropy, which is fixed at $S=\int s$, hence $\delta S/\delta z=(0,1)$. Two learned pointwise terms join the physical quadratic to form the energy,
\begin{equation}
  E_\theta[z]=Q(u)+\!\int_\Omega\!\varphi_\theta(u)+\!\int_\Omega\! e_\theta(s),\qquad
  Q=\tfrac12\|u\|^2\ \text{or}\ \tfrac12\big(\|v\|^2+c^2\|\nabla q\|^2\big),
  \label{eq:energy}
\end{equation}
where the convex, increasing function $e_\theta$ gives a temperature $T=e_\theta'(s)\ge T_0>0$, and $\varphi_\theta$ is a softplus network minus its degree-$\le2$ polynomial part, so it need not be nonnegative. A quadratic hidden in $\varphi_\theta$ would otherwise absorb part of $Q$; projecting it out removes that gauge (\S\ref{sec:state}). On the reversible side no projection is needed, since $L$ acts on $u$ alone and $L\,\delta S/\delta z=0$ is an identity: as in every GENERIC field theory \citep{ottinger2005beyond}, $S$ is a Casimir.

\paragraph{Poisson operator.} For scalar fields, $L_u=\alpha D+\lambda\,\mathcal P(uD+Du)\mathcal P$, where $D=\sum_a\partial_a$, the two coefficients are learned scalars, and $\mathcal P$ is the $2/3$-rule projector. Since $\{D,\,uD+Du\}$ is the compatible pair of the KdV hierarchy, $L_u$ is Poisson for every $(\alpha,\lambda)$ \citep{magri1978simple,olver1993applications}; moreover, it is the Lie--Poisson term that makes Burgers Hamiltonian with a quadratic energy ($\lambda=-\tfrac13$ up to the time scale). For the wave, $L_u=J\,\mathrm{diag}(D_d,D_d)$, with $J$ canonical and $d(k)$ a learned even multiplier. Both operators are exactly skew; their constant parts are Poisson on any state, and the Lie--Poisson term is Poisson on the resolved band (Table~\ref{tab:structure_small}, \S\ref{sec:poisson}). A free learned multiplier $a(k)$, of the kind used in GENERIC-FNO, is not compatible with the Lie--Poisson term and so is not used.

\paragraph{Friction operator.} Because $\delta S/\delta z=(0,1)$, the coupling of $u$ to $s$ is the sole route by which $u$ is dissipated, exactly as in the closed-form friction operators for heat conduction and the damped wave (\S\ref{sec:M}, eq.~\ref{eq:heatM}), which depend on $\nabla u$, $T$ and $\delta E/\delta z$. Our construction is $M=AA^{\!\top}$ with $A=D_M\,P\,\Gamma_\vartheta(z)$: $D_M$ is a multiplier with saturated flux (divergence) and local slots, $\Gamma_\vartheta$ a learned pointwise matrix field whose flux--local blocks are linear in $\nabla z$, and $P(x)$ the pointwise projector off the local flux of $\delta E/\delta z$. Whatever the parameters and the state, $M$ is symmetric positive semi-definite and $M\,\delta E/\delta z=0$; the flux/local structure supports diffusion and damping without introducing a separately parameterized skew transport operator (Proposition~\ref{prop:parity}); and the heat and wave constructions are verified on the tested band and constitutive range (Proposition~\ref{prop:represent}, \S\ref{sec:verification}).

\paragraph{What is exact.} Along the continuous GENERIC vector field, $E_\theta$ is conserved and $S$ non-decreasing for any parameters (Theorem~\ref{thm:metriplectic}). The resulting physical-energy bound requires a coercive mechanical energy. For scalar fields, writing $a=\lambda_{\max}(A)$ for the quadratic gauge, $a<\frac12$ gives
\begin{equation}
 (1-2a)Q(u_t)-|b|\sqrt{2Q(u_t)}
 \le E_\theta(z_0)-T_0S(z_0)+c_0,
 \label{eq:bound}
\end{equation}
where $\Pi_{\le2}\psi_\theta=u^{\!\top}Au+b\cdot u+c_0$.
For the wave, the mechanical energy controls $\nabla q$ but not the spatial mean of $q$, so an additional mean/coercivity hypothesis or a global potential lower bound is required (Theorem~\ref{thm:bound}). The symbol comparison of Remark~\ref{rem:gauge} is an empirical gauge diagnostic, not by itself a proof of these hypotheses. These bounds apply to the continuous flow and an exactly solved discrete-gradient step, not automatically to the training-time split integrator.

\paragraph{Integration.} The constant part of $L$ with the quadratic $Q$ generates a unitary map per Fourier mode, which we apply exactly in a symmetric placement of exact linear half-steps around a first-order, sub-cycled explicit remainder for the Lie--Poisson term, the potential and the friction; this is the integrator we train and evaluate with, and it is the map an unconstrained FNO has to learn on advection. The implicit discrete-gradient step of \S\ref{sec:integrator} conserves $E_\theta$ and produces $S$ exactly in discrete time (Theorem~\ref{thm:discrete}), because $\kerL$ contains every entropy direction at every state; when its fixed-point solve converges (\S\ref{sec:integrator}), it is the reference for the split step's energy error.

\paragraph{Diagnostics and ablations.} All diagnostics are independent of learned targets: the Jacobi residual \(\mathcal J\) on random test functionals; representability of the exact friction operators; learned reversible symbols against \(-ck\Delta t\), \(d=\Delta t\) and \(\lambda=-\Delta t/3\); dissipation \(\Pimodel\) against \(\Pistar\) in the same normalization; and the reversibility cosine \(\rmech\) of \citet{sulskis2026generic}, using a midpoint gradient so exact \(Q\)-conservation scores zero (\S\ref{sec:diagnostics}). Two ablations isolate the main design choices: \emph{learned \(E,S\)}, the closest constant-\(L\) analogue of GENERIC-FNO, and \emph{constant \(L\)}, the model of record with \(\lambda\equiv0\) (\S\ref{sec:ablations_method}).

\section{Experiments}
\label{sec:experiments}
\label{sec:app_extended}

\paragraph{Setup.} We test the four periodic equations of \citet{sulskis2026generic} in 1D and 2D: heat (\(\nu=0.05\)), advection (\(c=1\)), viscous Burgers (\(\nu=0.02\)), and damped wave (\(c=1,\gamma=0.3\), observed as \((q,v)\)). Data use band-limited random initial conditions, a de-aliased spectral solver, and \(\Delta t=0.1\); persistence errors are \(0.18\)--\(1.03\) (Table~\ref{tab:accuracy}). FNO \citep{li2021fno}, Transolver \citep{wu2024transolver} (with a kernel-3 circular depthwise convolution on the grid before each block; Appendix~\ref{sec:network_training}), and a periodic CNO-style operator serve both as unconstrained baselines, \(u_{t+1}=u_t+G(u_t)\), and as backbones, isolating the effect of structure. Each cell compares the baseline, learned-\(E,S\) ablation, PG-phys, and PG-phys with \(\lambda=0\). Results are ten-frame held-out rollouts, mean \(\pm\) std over three seeds; every model receives the same training protocol (Appendix~\ref{sec:network_training}).

\paragraph{Structure and representability.}\label{sec:verification}\label{sec:app_structure}
Table~\ref{tab:structure_small} verifies the identities at random initialization in double precision. Skewness, symmetry, both degeneracies, and \(\dot E=0\) hold to \(10^{-16}\) or better. Constant \(L\) satisfies Jacobi to \(10^{-16}\), versus \(10^{-3}\) to \(1\) for the projected operator of GENERIC-FNO; the Lie--Poisson term gives \(10^{-16}\) on band-limited states but \(0.17\)--\(0.78\) on generic states because of aliasing. The constructed operators reproduce the heat and wave reference friction actions to \(3\)--\(7\times10^{-16}\) relative error in both dimensions, with \(M\,\delta E/\delta z=10^{-17}\)--\(10^{-18}\), without global projection.

\begin{table}[t]
\centering\footnotesize
\setlength{\tabcolsep}{3pt}
\caption{Structural identities at random initialization, double precision, $16^d$ grid, maximum over a batch; Lrn is the learned-$E,S$ ablation, Phys the physical model of record. $\mathcal J$ is the normalized Jacobi residual of \S\ref{sec:diagnostics}; ``generic'' and ``band-limited'' refer to the test states and functionals. The Lie--Poisson rows are empty for the wave, whose operator has no pencil; the projected-$L$ row uses a free learned entropy in both columns, since with a Casimir entropy the projection is trivial. Discrete-gradient entries are at the tolerance of the fixed-point solve, not the floating-point floor. Entries at or below the double-precision floor are written $\le\!10^{-16}$.}
\label{tab:structure_small}
\begin{tabularx}{\linewidth}{>{\raggedright\arraybackslash}X cccccccc}
\toprule
 & \multicolumn{2}{c}{1D heat} & \multicolumn{2}{c}{2D Burgers} & \multicolumn{2}{c}{1D wave} & \multicolumn{2}{c}{2D wave} \\
\cmidrule(lr){2-3}\cmidrule(lr){4-5}\cmidrule(lr){6-7}\cmidrule(lr){8-9}
 & Lrn & Phys & Lrn & Phys & Lrn & Phys & Lrn & Phys \\
\midrule
$\ip{v}{Lv}$ (skewness)
  & $\le\!10^{-16}$ & $\le\!10^{-16}$ & $\le\!10^{-16}$ & $\le\!10^{-16}$ & $\le\!10^{-16}$ & $\le\!10^{-16}$ & $\le\!10^{-16}$ & $\le\!10^{-16}$ \\
$\ip{w}{Mv}-\ip{v}{Mw}$ (symmetry)
  & $2\!\cdot\!10^{-16}$ & $\le\!10^{-16}$ & $\le\!10^{-16}$ & $\le\!10^{-16}$ & $1\!\cdot\!10^{-16}$ & $1\!\cdot\!10^{-16}$ & $\le\!10^{-16}$ & $\le\!10^{-16}$ \\
$\|L\,\delta S/\delta z\|/\|\delta S/\delta z\|$
  & $4\!\cdot\!10^{-18}$ & $0$ & $8\!\cdot\!10^{-18}$ & $0$ & $2\!\cdot\!10^{-18}$ & $0$ & $8\!\cdot\!10^{-19}$ & $0$ \\
$\|M\,\delta E/\delta z\|/\|\delta E/\delta z\|$
  & $0$ & $3\!\cdot\!10^{-18}$ & $0$ & $2\!\cdot\!10^{-18}$ & $0$ & $1\!\cdot\!10^{-19}$ & $0$ & $4\!\cdot\!10^{-20}$ \\
$\ip{\delta E/\delta z}{\partial_t z}$ ($\dot E$)
  & $1\!\cdot\!10^{-16}$ & $\le\!10^{-16}$ & $\le\!10^{-16}$ & $\le\!10^{-16}$ & $\le\!10^{-16}$ & $\le\!10^{-17}$ & $\le\!10^{-16}$ & $\le\!10^{-17}$ \\
$\min\ \ip{\delta S/\delta z}{\partial_t z}$ ($\dot S$)
  & $0.58$ & $0.014$ & $0.68$ & $0.060$ & $0.34$ & $0.31$ & $0.78$ & $0.0027$ \\
\midrule
$\mathcal J$, our $L$, generic state
  & $7\!\cdot\!10^{-16}$ & $1\!\cdot\!10^{-16}$ & $1\!\cdot\!10^{-16}$ & $3\!\cdot\!10^{-16}$ & $3\!\cdot\!10^{-16}$ & $4\!\cdot\!10^{-16}$ & $2\!\cdot\!10^{-16}$ & $2\!\cdot\!10^{-16}$ \\
$\mathcal J$, Lie--Poisson ($\lambda\!=\!0.3$), generic
  & -- & $0.78$ & -- & $0.17$ & -- & -- & -- & -- \\
$\mathcal J$, Lie--Poisson ($\lambda\!=\!0.3$), band-limited
  & -- & $3\!\cdot\!10^{-16}$ & -- & $4\!\cdot\!10^{-16}$ & -- & -- & -- & -- \\
$\mathcal J$, projected $L$ of GENERIC-FNO
  & $1.0$ & $0.38$ & $5.0\!\cdot\!10^{-2}$ & $4.8\!\cdot\!10^{-2}$ & $1.4\!\cdot\!10^{-2}$ & $2.5\!\cdot\!10^{-2}$ & $7.6\!\cdot\!10^{-4}$ & $1.9\!\cdot\!10^{-3}$ \\
\midrule
$|E_1-E_0|/|E_0|$, discrete-gradient step
  & $1\!\cdot\!10^{-16}$ & $1\!\cdot\!10^{-11}$ & $2\!\cdot\!10^{-16}$ & $8\!\cdot\!10^{-11}$ & $3\!\cdot\!10^{-16}$ & $2\!\cdot\!10^{-14}$ & $1\!\cdot\!10^{-16}$ & $5\!\cdot\!10^{-13}$ \\
$|E_1-E_0|/|E_0|$, explicit step
  & $2\!\cdot\!10^{-11}$ & $5\!\cdot\!10^{-2}$ & $7\!\cdot\!10^{-11}$ & $5\!\cdot\!10^{-2}$ & $2\!\cdot\!10^{-11}$ & $5\!\cdot\!10^{-4}$ & $8\!\cdot\!10^{-12}$ & $3\!\cdot\!10^{-3}$ \\
\bottomrule
\end{tabularx}
\end{table}

\paragraph{Accuracy.} PG-phys beats its baseline in 20 of 24 cells and 61 of 72 seed-level comparisons (13 of 16 and 39 of 48 on FNO and CNO; Table~\ref{tab:accuracy}). Seventy of 72 seed-level runs both remain finite and have a measured data-range bound ratio below one; one Transolver 1D-wave seed diverged in training and one Transolver 2D-heat seed was unstable under the split step (\(\max_tQ/\text{bound}=1.67\); marks in Table~\ref{tab:accuracy}). Neither failure reproduced in the reported retraining: retraining the same two seeds gave a finite 1D-wave rollout (\(0.040\)) and a stable 2D-heat rollout (\(0.033\), \(\max_tQ/\text{bound}=0.15\)). PG-phys reduces error by \(2.7\)--\(33\times\) on advection, \(1.2\)--\(8.7\times\) on the wave, and \(2.2\)--\(3.5\times\) on Burgers. Heat is the exception: PG-phys loses all three 1D-FNO seeds (\(0.017\) vs.\ \(0.0014\)) and the three 2D cells by mean, and beats the Transolver only in 1D, where that baseline is budget-limited (Appendix~\ref{sec:network_training}). The learned reversible operators recover their known targets directly: the advection symbol matches \(-ck\Delta t\) to three decimals in all nine 1D runs, \(d(k)=0.1004\pm0.0005\) on the wave, and \(\lambda=-0.0331\pm0.0003\) on Burgers versus \(-\Delta t/3=-0.0333\). These symbol comparisons also expose structural errors that rollout accuracy can hide.

\begin{table}[t]
\centering\scriptsize
\setlength{\tabcolsep}{3.5pt}
\caption{Ten-frame rollout relative $L^2$ error, mean $\pm$ std over three seeds, with the persistence baseline and parameter counts (baseline / PG-phys, thousands). Bold: best of the four models (for the wave, whose operator has no pencil, the two PG columns are one model trained twice and are both marked; their differences are run-to-run noise). $\dagger$: one of three seeds diverged; mean $\pm$ std of the other two. $\ddagger$: one seed finite but unstable under the split step ($\max_tQ/\text{bound}=1.67$); three-seed statistics shown. Neither failure reproduced on retraining.}
\label{tab:accuracy}
\begin{tabular}{llcccccr}
\toprule
 & & Persist. & Baseline & Learned $E,S$ & PG-phys & PG-phys ($\lambda\equiv0$) & Params \\
\midrule
\multirow{3}{*}{1D heat}
 & FNO        & 0.184 & $\mathbf{0.0014\pm0.0003}$ & $0.0096\pm0.0066$ & $0.017\pm0.005$ & $0.0056\pm0.0011$ & 44 / 18 \\
 & Transolver & 0.184 & $0.088\pm0.050$ & $0.018\pm0.006$ & $0.015\pm0.007$ & $\mathbf{0.0087\pm0.0049}$ & 34 / 41 \\
 & CNO        & 0.184 & $0.015\pm0.011$ & $0.0097\pm0.0031$ & $0.013\pm0.006$ & $\mathbf{0.0046\pm0.0008}$ & 21 / 12 \\
\midrule
\multirow{3}{*}{1D advection}
 & FNO        & 0.863 & $0.0040\pm0.0004$ & $0.019\pm0.006$ & $0.0015\pm0.0007$ & $\mathbf{0.0001\pm0.0000}$ & 44 / 18 \\
 & Transolver & 0.863 & $0.029\pm0.004$ & $0.094\pm0.096$ & $0.0016\pm0.0006$ & $\mathbf{0.0001\pm0.0001}$ & 34 / 41 \\
 & CNO        & 0.863 & $0.012\pm0.002$ & $0.035\pm0.008$ & $0.0017\pm0.0010$ & $\mathbf{0.0001\pm0.0001}$ & 21 / 12 \\
\midrule
\multirow{3}{*}{1D Burgers}
 & FNO        & 0.645 & $0.090\pm0.010$ & $0.419\pm0.068$ & $\mathbf{0.040\pm0.003}$ & $0.331\pm0.012$ & 44 / 18 \\
 & Transolver & 0.645 & $0.151\pm0.047$ & $0.378\pm0.024$ & $\mathbf{0.043\pm0.004}$ & $0.386\pm0.066$ & 34 / 41 \\
 & CNO        & 0.645 & $0.072\pm0.005$ & $0.377\pm0.003$ & $\mathbf{0.033\pm0.011}$ & $0.279\pm0.004$ & 21 / 12 \\
\midrule
\multirow{3}{*}{1D wave}
 & FNO        & 0.938 & $0.036\pm0.011$ & $0.813\pm0.645$ & $\mathbf{0.030\pm0.007}$ & $\mathbf{0.029\pm0.008}$ & 44 / 22 \\
 & Transolver & 0.938 & $0.221\pm0.022$ & $0.729\pm0.267$ & $\mathbf{0.027\pm0.010}^{\dagger}$ & $\mathbf{0.027\pm0.010}^{\dagger}$ & 34 / 45 \\
 & CNO        & 0.938 & $0.104\pm0.024$ & $0.511\pm0.212$ & $\mathbf{0.035\pm0.003}$ & $\mathbf{0.036\pm0.002}$ & 21 / 16 \\
\midrule
\multirow{3}{*}{2D heat}
 & FNO        & 0.275 & $0.038\pm0.002$ & $0.146\pm0.063$ & $0.042\pm0.006$ & $\mathbf{0.033\pm0.005}$ & 445 / 128 \\
 & Transolver & 0.275 & $0.033\pm0.007$ & $0.021\pm0.006$ & $0.086\pm0.095^{\ddagger}$ & $\mathbf{0.020\pm0.002}$ & 34 / 51 \\
 & CNO        & 0.275 & $0.036\pm0.010$ & $0.144\pm0.029$ & $0.044\pm0.005$ & $\mathbf{0.033\pm0.009}$ & 62 / 32 \\
\midrule
\multirow{3}{*}{2D advection}
 & FNO        & 0.960 & $0.074\pm0.007$ & $0.506\pm0.102$ & $0.0038\pm0.0022$ & $\mathbf{0.0002\pm0.0001}$ & 445 / 128 \\
 & Transolver & 0.960 & $0.057\pm0.007$ & $0.095\pm0.019$ & $0.0024\pm0.0021$ & $\mathbf{0.0001\pm0.0000}$ & 34 / 51 \\
 & CNO        & 0.960 & $0.054\pm0.003$ & $0.504\pm0.023$ & $0.0016\pm0.0013$ & $\mathbf{0.0001\pm0.0001}$ & 62 / 32 \\
\midrule
\multirow{3}{*}{2D Burgers}
 & FNO        & 0.761 & $0.275\pm0.007$ & $0.568\pm0.017$ & $\mathbf{0.091\pm0.013}$ & $0.367\pm0.026$ & 445 / 128 \\
 & Transolver & 0.761 & $0.254\pm0.002$ & $0.556\pm0.131$ & $\mathbf{0.094\pm0.011}$ & $0.365\pm0.021$ & 34 / 51 \\
 & CNO        & 0.761 & $0.222\pm0.005$ & $0.586\pm0.055$ & $\mathbf{0.084\pm0.010}$ & $0.417\pm0.017^{\dagger}$ & 62 / 32 \\
\midrule
\multirow{3}{*}{2D wave}
 & FNO        & 1.026 & $0.305\pm0.025$ & $0.922\pm0.156$ & $\mathbf{0.035\pm0.002}$ & $\mathbf{0.035\pm0.002}$ & 445 / 146 \\
 & Transolver & 1.026 & $0.210\pm0.040$ & $0.876\pm0.556$ & $\mathbf{0.035\pm0.002}$ & $\mathbf{0.035\pm0.002}$ & 34 / 69 \\
 & CNO        & 1.026 & $0.301\pm0.024$ & $0.811\pm0.165$ & $\mathbf{0.035\pm0.002}$ & $\mathbf{0.035\pm0.002}$ & 62 / 50 \\
\bottomrule
\end{tabular}
\end{table}

\begin{figure}[t]
\centering
\includegraphics[width=\linewidth]{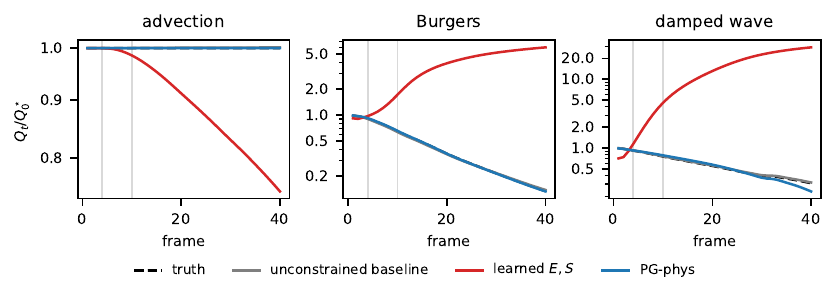}
\caption{Physical energy $Q_t/Q^\star_0$ along a 40-frame rollout from held-out initial conditions (1D, FNO backbone, seed 0; grey lines mark the 4-frame training horizon and the 10-frame evaluation horizon). PG-phys stays on the truth on all three equations. The learned-energy ablation conserves its own $E_\theta$ while losing a quarter of the physical energy on advection and multiplying it six- and thirty-fold on Burgers and the wave. The unconstrained baseline matches the truth on advection and Burgers and drifts on the wave.}
\label{fig:energy}
\end{figure}

\paragraph{Thermodynamics.} PG-phys recovers \(0.87\)--\(1.00\) of the true heat/Burgers dissipation rate in eleven of twelve cells and \(1.12\) in the twelfth (2D Transolver heat, two stable seeds), and \(0.75\)--\(0.94\) on the wave (Table~\ref{tab:calibration}). Advection is effectively nondissipative (\(|\Pimodel|\le2\times10^{-4}\)), with \(Q\) conserved to \(4\times10^{-3}\) and \(\rmech=10^{-6}\)--\(7\times10^{-4}\). The empirical, data-range reference ratio \(\max_tQ/\text{bound}\) is below one on 70 of 72 seeds under the split step, and below \(0.9\) on every cell of Table~\ref{tab:calibration}. By contrast, learned-\(E,S\) satisfies the same machine-precision identities while physical energy grows \(2.3\)--\(5.8\times\) on the wave and \(1.6\)--\(2.6\times\) on Burgers despite conserving \(E_\theta\) to \(10^{-8}\). Figure~\ref{fig:energy} follows the physical energy for 40 frames, ten times the training horizon: PG-phys stays on the truth on all three equations, learned-\(E,S\) loses a quarter of the energy on advection and multiplies it six- and thirty-fold on Burgers and the wave, and the unconstrained baseline drifts only on the wave (\(+35\%\) at frame 40). Exact degeneracy constrains the model's functionals; physical energy constrains the physics.

\begin{table}[t]
\centering\small
\caption{Dissipation calibration of PG-phys (three-seed means; FNO / Transolver / CNO): fractional one-step decay of the physical energy against the truth, physical energy after ten frames relative to the truth, and $\max_tQ$ relative to the bound \eqref{eq:bound}, evaluated using the reference quantity $a=0$ and $-\inf\varphi_\theta$ over the sampled data range (not a global theorem certificate) (Remark~\ref{rem:gauge}). $\dagger$: two seeds (the third diverged in training). $\ddagger$: two seeds (the third was unstable under the split step, $\max_tQ/\text{bound}=1.67$); neither failure reproduced on retraining (see Accuracy).}
\label{tab:calibration}
\begin{tabular}{llccc}
\toprule
 & & $\Pimodel/\Pistar$ & $Q_{10}/Q_{10}^\star$ & $\max_tQ/\text{bound}$ \\
\midrule
\multirow{4}{*}{1D} & heat             & 0.94 / 0.99 / 0.96 & 1.02 / 1.00 / 1.01 & 0.16 / 0.17 / 0.14 \\
                    & Burgers          & 1.00 / 1.00 / 1.00 & 1.00 / 1.00 / 1.00 & 0.13 / 0.14 / 0.10 \\
                    & wave$^\dagger$   & 0.75 / 0.86 / 0.76 & 1.08 / 1.04 / 1.08 & 0.89 / 0.83 / 0.87 \\
                    & advection        & $|\Pimodel|\le10^{-4}$ & 1.000 / 1.000 / 1.000 & 0.44 / 0.28 / 0.42 \\
\midrule
\multirow{4}{*}{2D} & heat$^\ddagger$  & 0.87 / 1.12 / 0.88 & 1.07 / 0.94 / 1.06 & 0.35 / 0.13 / 0.35 \\
                    & Burgers          & 0.93 / 0.93 / 0.95 & 1.05 / 1.04 / 1.04 & 0.40 / 0.25 / 0.36 \\
                    & wave             & 0.94 / 0.90 / 0.94 & 1.02 / 1.03 / 1.02 & 0.85 / 0.84 / 0.85 \\
                    & advection        & $|\Pimodel|\le2\cdot10^{-4}$ & 0.998 / 0.998 / 0.999 & 0.44 / 0.41 / 0.40 \\
\bottomrule
\end{tabular}
\end{table}

\paragraph{Ablations.}\label{sec:constantL}
Setting \(\lambda=0\) gives exact Jacobi on every state and improves the linear equations: \(10^{-4}\) advection error in every cell, better heat accuracy in all six cells, and unchanged wave accuracy. It cannot represent Burgers, however, where it is worse than baseline in all six cells and one 2D-CNO seed diverges. Thus the Lie--Poisson pencil trades some linear accuracy for nonlinear transport within one model. The constant-\(L\) and learned-\(E,S\) ablations win 47 and 9 of 72 seed-level comparisons, respectively.

\paragraph{Super-resolution and integration.}\label{sec:integrator_study}
FNO PG-phys transfers directly across resolutions: for three 1D-heat seeds, errors at \(64/128/256\) points are \(0.01827/0.01826/0.01826\), \(0.02162/0.02160/0.02159\), and \(0.01117/0.01115/0.01113\), with thermodynamic diagnostics agreeing to four digits. CNO is excluded because its grid-defined kernels are not resolution-invariant; Transolver was not evaluated. Correct Riesz scaling of the variational derivative is essential: an earlier version with Euclidean scaling saw error grow from \(0.011\) to \(0.078\) to \(0.125\) across the same grids. Replacing the trained split step with \(3/6/12\) discrete-gradient iterations reduces the per-step energy error from \(1.2\times10^{-5}\) to \(3.5\times10^{-6}/1.9\times10^{-7}/2.6\times10^{-8}\), while rollout error changes only from \(0.0183\) to \(0.0186\) and the thermodynamic diagnostics remain unchanged to two digits; on a trained 2D-heat FNO model the same study gives \(9\times10^{-5}\) to \(10^{-6}/10^{-7}/1.5\times10^{-7}\) with rollout error, \(Q_{10}/Q^\star_{10}\) and \(\max_tQ/\text{bound}\) unchanged to three digits. The fixed-point solve converges on the FNO models but not on the trained 2D-heat Transolver models (\S\ref{sec:integrator}).

\paragraph{Cost.} PG-phys uses \(0.3\)--\(0.8\times\) the parameters of the FNO/CNO baselines (\(1.2\)--\(2\times\) for Transolver) but trains \(25\)--\(40\times\) slower (FNO, CNO; about \(20\times\) for Transolver). The cost comes from eight automatic-differentiation evaluations of the energy gradient and friction coefficients per frame; the exact linear step requires only one FFT pair per step. A fused sub-cycled remainder should remove much of this overhead.

% =====================================================================
%  Sections 5-6 (body): Limitations and scope, Conclusion; plus the
%  Broader Impact / Reproducibility statements ICLR expects after the
%  references (unnumbered, outside the 9-page limit).
% =====================================================================

\section{Limitations and Scope}
\label{sec:limitations}

\paragraph{Jacobi identity.} For constant \(L\), the Jacobi identity holds for every state. With the Lie--Poisson term, it holds to machine precision on band-limited states and functionals but incurs aliasing residuals of \(0.2\)--\(0.8\) on generic full-spectrum states. Thus, the continuum nonlinear-transport operator is Poisson, whereas the grid bracket shows Jacobi residuals near machine precision on the tested alias-free band. We restrict \(L\) to the compatible pencil and retain the constant-\(L\) variant for this reason. The dispersive \(D^3\) member is excluded for stability, so dispersive equations remain outside the present scope.

\paragraph{Cost of the pencil.} For linear equations, constant \(L\) performs better: by an order of magnitude on advection and a factor of \(1.3\)--\(3\) on heat. In the Lie--Poisson model, the transport coefficient and learned potential must converge to zero for a gradient flow while the friction operator is still being learned. Heat is the systematic loss: the only cell lost on all three seeds is heat with the 1D FNO, and the three 2D heat cells are lost by mean. This may reflect the added burden or a capacity mismatch; a targeted capacity and schedule sweep is the natural next test.

\paragraph{Baselines.} All models share one protocol of roughly 350 optimizer steps. FNO and CNO converge within it; the Transolver baseline is still improving, and at five times the budget it reaches \(0.005\) on 1D heat and \(0.007\) on 1D advection (Appendix~\ref{sec:network_training}). The Transolver column therefore compares equal budgets, not converged models, and against a fully trained Transolver PG-phys would lose heat and keep advection, Burgers and the wave.

\paragraph{Integration.} The split step conserves mechanical energy exactly under the linear flow, but does not conserve total energy exactly, and one Transolver 2D-heat seed left the bound under it (\(\max_tQ/\text{bound}=1.67\); it did not reproduce on retraining). Exact discrete first and second laws require the implicit discrete-gradient step, which we use as a reference rather than during training, and whose fixed-point solve converges on the FNO models but not on the trained 2D-heat Transolver models (\S\ref{sec:integrator}). The explicit remainder is stabilized by saturation and sub-cycling; at larger \(\nu k^2\Delta t\), an implicit or exponential friction step would be preferable.

\paragraph{Scope of the guarantees.} The guarantees apply to the model's own functionals: \(E_\theta\) is conserved and \(S\) is non-decreasing for any parameters, and mechanical-energy boundedness follows under the applicable scalar or wave hypotheses of Theorem~\ref{thm:bound}; symbol comparisons are empirical diagnostics, not certificates of wave coercivity. They do not guarantee that the learned flow matches the governing equation; that is assessed by comparing \(\Pimodel\) with \(\Pistar\) under the same normalization. This study is limited to scalar and \((q,v)\) fields on periodic domains and four canonical equations with known dissipation. Extending the framework to vector and multi-field systems, non-periodic boundaries, and complex fluids---where the latent entropy density can represent a physical thermodynamic variable---is left for future work.

\section{Conclusion}
\label{sec:conclusion}

GENERIC-FNO \citep{sulskis2026generic} enforced the two GENERIC degeneracies by projection, yielding a state-dependent reversible operator with unverified Poisson structure. Here, degeneracy is intrinsic: entropy is the integral of a latent density and a Casimir of a Poisson operator acting only on the mechanical field. Energy combines physical quadratic and learned pointwise terms; state dependence enters through the friction operator, whose Onsager structure accommodates heat diffusion and wave damping without an independent reversible transport generator, while a compatible Lie--Poisson pencil represents nonlinear transport. Because the constant part of \(L\) is fixed and \(Q\) quadratic, the linear flow is exactly integrable. For arbitrary parameters, skewness, positivity, both degeneracies, and the continuum Jacobi identity holds (with numerical verification on tested alias-free inputs); under the relevant scalar or wave hypotheses of Theorem~\ref{thm:bound}, physical energy is bounded along the exact flow; and a discrete-gradient step preserves both laws in discrete time. Across four equations, two dimensions, and three backbones, the model beats unconstrained baselines with the same backbone in 61/72 seed-level comparisons and 20/24 cells, recovers the exact transport and wave symbols and Burgers coefficient, reproduces physical heat and Burgers dissipation rates, and remains on the physical energy for \(10\times\) the training horizon; the only systematic loss is heat, where the unconstrained operator is as good or better. Ablations expose the predicted tradeoffs between physical and learned energy and between exact Jacobi structure and nonlinear transport, while operator-level diagnostics reveal errors missed by accuracy metrics, including gradient normalization, transport splitting between \(L\) and \(M\), the quadratic gauge, and an ablation trained with the wrong integrator. Thus, structure-preserving models should pair conventional accuracy metrics with direct comparison of learned and exact operator symbols.

\subsection*{Broader Impact}
The continuous model conserves its own energy and produces its own entropy; exact discrete laws require a solved discrete-gradient step. This is a structural property of the parameterization and not a certificate of physical correctness: applied to a forced, open or non-conservative system, which lies outside GENERIC, it will still report machine-precision residuals, and in this paper the learned-energy ablation conserves its energy to $10^{-8}$ while the physical energy of a wave grows six-fold within ten frames and thirty-fold within forty. Exact structure should not be read as validation, in particular in safety-critical simulation; the calibration against ground-truth dissipation is the check that matters, and it requires a target system that is in class. The intended benefit is a surrogate that stays on the physical manifold far from its training data, at a training cost we report to be $25$--$40\times$ that of an unconstrained operator.

\subsection*{AI use statement}

In this work, we used generative AI tools to assist with verifying mathematical proofs and derivations, identifying and reviewing relevant literature, cross-checking citations, and comparing technical claims and research ideas against the existing literature. Generative AI was also used to refine and condense portions of the manuscript, particularly to improve clarity and meet the nine-page limit without changing the intended technical content.

We additionally used Claude Code to assist with implementing the Python experimental framework from author-specified mathematical formulations, architectural designs, algorithms, and experimental requirements. AI-generated and AI-assisted code was reviewed, tested, and validated against the intended formulations and expected numerical behavior. AI-assisted literature and citation suggestions were independently checked against the original sources, and AI-assisted mathematical reasoning was reviewed for correctness. The authors determined the research questions, methodology, model architecture, experimental design, initial skeleton code, proof sketches, the interpretation of the results, and the scope/limitations and conclusions.

We have reviewed all AI-assisted work and take responsibility for the final content of this work, including text, claims, code, and other artifacts produced with the aid of generative AI.

%\subsection*{Ethics statement}

%(This section is \textbf{recommended} and does not count toward the page limit.)

%If authors feel that their paper submission raises questions regarding the Code
%of Ethics, they are encouraged to include a paragraph of Ethics Statement (at
%the end of the main text before references) to address potential concerns where
%appropriate. Topics include, but are not limited to, studies that involve human
%subjects, practices to data set releases, potentially harmful insights,
%methodologies and applications, potential conflicts of interest and sponsorship,
%discrimination/bias/fairness concerns, privacy and security issues, legal
%compliance, and research integrity issues (e.g., IRB, documentation, research
%ethics). This statement should not be more than 1 page.

\subsection*{Reproducibility Statement}
Every number in the tables is read from one fresh run of a single self-contained notebook (PyTorch only) that generates the data, trains all four models on all twenty-four cells for three seeds, evaluates the structural identities in double precision, runs the representability, super-resolution and integrator checks, and writes a single review file from which the tables were built. The learned $\lambda$, the energy rollout of Figure~\ref{fig:energy}, the retraining and discrete-gradient checks of \S\ref{sec:experiments}, and the training-time ratios come from separate single-cell notebooks and an earlier run of the same grid. All notebooks, the review file and the per-seed results are in the supplementary material \url{https://anonymous.4open.science/r/Poisson-GENERIC-Neural-Operators-F4C4}. Appendix~\ref{sec:network_training} gives the architectures, data generation and training protocol, Appendix~\ref{sec:app_method} the full construction, and Appendix~\ref{sec:app_proofs} the proofs. Two implementation details that a reimplementation is likely to get wrong are called out in the text: the grid-measure inner product with Riesz-scaled gradients (Appendix~\ref{sec:app_method}) and the separate learning rate for the operator symbols (Appendix~\ref{sec:network_training}).

%\subsubsection*{Author Contributions}
%If you'd like to, you may include  a section for author contributions as is done
%in many journals. This is optional and at the discretion of the authors.

%\subsubsection*{Acknowledgments}
%Use unnumbered third level headings for the acknowledgments. All
%acknowledgments, including those to funding agencies, go at the end of the paper.

\bibliography{iclr2027_conference}
\bibliographystyle{iclr2027_conference}

% =====================================================================
%  Appendices.  A: full method (from the user's draft, edited for flow and
%  to match the final model).  B: architecture and training protocol.
%  C: proofs (\input{appendix_proofs}, written separately).  D: extended results.
% =====================================================================
\appendix

\section{Building on GENERIC-FNO: the full construction}
\label{sec:app_method}

GENERIC-FNO \citep{sulskis2026generic} enforced the degeneracy conditions in \eqref{eq:generic} by placing constant Fourier multipliers between rank-one projections,
\[
L=(I-P_S)\,D_L\,(I-P_S),
\qquad
M=(I-P_E)\,D_M\,(I-P_E),
\]
where $E$ and $S$ were learned neural operators. We remove the reversible sandwich. With it, $L$ depends on $\delta S/\delta z$; a state-dependent skew operator is Poisson only if it also satisfies the Jacobi identity, and this one does not (Table~\ref{tab:structure_small}).

The reconstruction follows from a single design choice: augment the state with a latent entropy density and set $S=\int_\Omega s\,dx$. Because $L$ acts only on the mechanical field, $S$ is a Casimir and $L\,\delta S/\delta z=0$ holds identically. The remaining components follow. A freely learned energy would let the latent entropy act as an energy source, so the energy is the physical one (\S\ref{sec:ablations_method}). Since $\delta S/\delta z=(0,1)$, all dissipation of the mechanical field passes through its coupling to $s$, so the friction operator must carry the state dependence that physical friction operators carry (\S\ref{sec:M}). And the reversible operator must be Poisson on its own: a constant multiplier is, and a compatible Lie--Poisson term is on the resolved band (\S\ref{sec:poisson}).

Throughout, on a periodic domain $\Omega\subset\mathbb R^d$ we use the grid-measure inner product $\ip{f}{g}=|\Omega|^{-1}\int_\Omega f\cdot g\,dx$; on an $N$-point grid this is a sum over channels and points divided by $N$, and the variational derivative of a mean-pooled functional is $N$ times its automatic-differentiation gradient.\footnote{This normalization is not cosmetic. With a per-channel mean, the Riesz-scaled gradients of $Q$ on the $p$ mechanical channels and of $E$ on the $p+1$ channels of $z$ differ by a factor $(p+1)/p$, the exact linear step then integrates only part of the linear flow, and the learned reversible symbols settle at an integer fraction of their exact values---one half for scalar fields and one third for the wave---for every seed and backbone. We found this by comparing learned and exact symbols, which is why that comparison is one of the diagnostics.}

\subsection{State, energy, entropy}
\label{sec:state}

The state is $z=(u,s)$, where $u:\Omega\to\mathbb R^p$ is the observed mechanical field ($p=1$ for heat, advection and Burgers; $u=(q,v)$ with $p=2$ for the damped wave) and $s:\Omega\to\mathbb R$ is an unobserved entropy density, trained only through its effect on $u$. At the start of each rollout an encoder from the same backbone family initializes it, $s_0=\psi_\eta(u_0)$, so a rollout can begin at any observed frame.

The energy is
\begin{equation}
  E_\theta[z]
  =
  Q(u)
  +\frac{1}{|\Omega|}\int_\Omega \varphi_\theta(u)\,dx
  +\frac{1}{|\Omega|}\int_\Omega e_\theta(s)\,dx,
  \qquad
  Q(u)=
  \begin{cases}
    \tfrac12\|u\|^2 & \text{(scalar fields)},\\[2pt]
    \tfrac12\bigl(\|v\|^2+c^2\|\nabla q\|^2\bigr) & \text{(wave)},
  \end{cases}
  \label{eq:energy_app}
\end{equation}
where $Q$ is the fixed quadratic functional used as a diagnostic in GENERIC-FNO: the mechanical energy for the wave, and for the scalar equations the quadratic Lyapunov functional that advection conserves and heat and Burgers dissipate. The remaining terms are pointwise learned functions. The internal energy is
\begin{equation}
  e_\theta(s)
  =
  T_0s+\sum_i w_i\,\operatorname{softplus}(a_i s+b_i),
  \qquad
  w_i,a_i>0,
  \qquad
  T(s):=e_\theta'(s)\ge T_0>0.
  \label{eq:internal}
\end{equation}
The temperature floor is needed because the friction operators of \S\ref{sec:M} contain a factor $1/T$; without it, negative drift in $s$ drives $T$ toward zero and the explicit step with it. The learned potential is
\begin{equation}
  \varphi_\theta
  =
  \psi_\theta-\Pi_{\le2}\psi_\theta,
  \qquad
  \psi_\theta=\operatorname{softplus}(\operatorname{MLP}),
  \label{eq:potential}
\end{equation}
where $\Pi_{\le2}$ is the weighted least-squares projection onto polynomials of degree at most two in the $p$ components of $u$, with respect to a fixed Gaussian measure over the data range. This removes a quadratic gauge: a term proportional to $\tfrac12u^2$ in $\varphi_\theta$ is indistinguishable from a rescaling of $Q$, and in practice the network learns one, so that $L$ represents only part of the linear dynamics and the explicit step carries the rest. Subtracting the Taylor expansion at $u=0$ is insufficient, because the learned quadratic need not be centered there. With \eqref{eq:potential} the linear dynamics stay in $L$, where they are integrated exactly. Write $\Pi_{\le2}\psi_\theta(u)=u^{\!\top}\!Au+b\cdot u+c$ and $a:=\lambda_{\max}(A)$. The resulting $\varphi_\theta$ need not be nonnegative, but since $\psi_\theta\ge0$ it satisfies $\varphi_\theta\ge-\Pi_{\le2}\psi_\theta$, which is bounded below by a quadratic of curvature $-a$; the scalar mechanical energy is bounded for $a<\tfrac12$; the wave case needs additional hypotheses (Theorem~\ref{thm:bound}), and $a$ is measurable from the learned symbols (Remark~\ref{rem:gauge}).

The entropy is fixed,
\begin{equation}
  S[z]=\frac{1}{|\Omega|}\int_\Omega s\,dx,\qquad \frac{\delta S}{\delta z}=(0,1),
  \label{eq:entropy}
\end{equation}
the entropy of every GENERIC field theory \citep{ottinger2005beyond}. Since $L$ acts only on $u$, the reversible degeneracy $L\,\delta S/\delta z=0$ holds for every state and every admissible $L$.

\subsection{The Poisson operator}
\label{sec:poisson}

For the scalar equations,
\begin{equation}
  L=\begin{pmatrix} L_u & 0\\ 0 & 0\end{pmatrix},
  \qquad
  L_u=\alpha D+\lambda\,\mathcal P\,(uD+Du)\,\mathcal P,
  \qquad
  D=\sum_{a=1}^d\partial_a,
  \label{eq:L}
\end{equation}
with learnable scalars $\alpha$ and $\lambda$. The constant term $\alpha D$ has symbol $i\alpha k_D$, $k_D=\sum_a k_a$, on the band $1\le|k|_\infty\le m_L$; it is skew-adjoint and Poisson because its coefficients are constant. The second term is the Lie--Poisson operator of vector fields along $D$. It makes Burgers Hamiltonian with respect to the quadratic energy,
\[
-\partial_x(u^2/2)=-\tfrac13(u\partial_x+\partial_xu)\,\frac{\delta}{\delta u}\tfrac12\|u\|^2,
\]
and $\{D,\,uD+Du\}$ is a compatible pair of the KdV hierarchy, so $L_u$ is Poisson for all $(\alpha,\lambda)$ \citep{magri1978simple,olver1993applications}. A generic learned multiplier $a(k)$ is not compatible with the Lie--Poisson term---the Jacobi residual of $a(k)+\lambda(uD+Du)$ is $\mathcal O(10^{-1})$---which is why we use the one-parameter pencil rather than the free multiplier of GENERIC-FNO. The operator $\mathcal P$ is the $2/3$-rule spectral projector. The sandwich preserves skewness on the grid (pointwise multiplication is symmetric, $D$ is skew, $\mathcal P$ is symmetric) and removes the aliasing of quadratic products; it cannot remove the aliasing of the triple products that enter the Jacobi identity. The discrete Lie--Poisson operator therefore satisfies Jacobi to machine precision on band-limited states and functionals and has a residual of $0.2$--$0.8$ on generic full-spectrum states (Table~\ref{tab:structure_small}). We claim Poisson structure on the resolved band accordingly, and keep the $\lambda\equiv0$ variant, Poisson for any state and unable to represent Burgers, as an ablation. The compatible dispersive term $D^3$ is omitted: its $k^3$ growth destabilizes both integrators of \S\ref{sec:integrator} at high wavenumbers, and none of the test equations needs it.

For the wave, $u=(q,v)$ and
\[
L_u=J\,\operatorname{diag}(D_d,D_d),\qquad J=\begin{pmatrix}0&I\\-I&0\end{pmatrix},
\]
with $D_d$ an even, nonnegative learned multiplier $d(k)\ge0$ on the band. It is Poisson because its coefficients are constant, and skew because $D_d$ is symmetric and commutes with $J$. The exact damped wave has $d\equiv1$ up to the time scale, and the learned multiplier matches it to within $0.5\%$ (\S\ref{sec:experiments}). In both cases $\ker L$ contains the fixed subspace of $s$-directions, whatever the state.

\subsection{The friction operator}
\label{sec:M}

Since $\delta S/\delta z=(0,1)$, the irreversible update of $u$ is the $u$-row of $M$ applied to the entropy direction, and in the physics that row is state dependent. For heat conduction in these variables, $E=\tfrac12\|u\|^2+\int e(s)$, $S=\int s$, $T=e'(s)$, the positive semi-definite operator
\begin{equation}
  M=CC^{\!\top},\qquad
  C^{\!\top}(f,g)=\sqrt{\nu}\left(\sqrt{T}\,\nabla f-\frac{\nabla u}{\sqrt{T}}\,g\right)
  \label{eq:heatM}
\end{equation}
satisfies $[M\,\delta S/\delta z]_u=\nu\Delta u$, $[M\,\delta S/\delta z]_s=\nu|\nabla u|^2/T$ (viscous heating), and $M\,\delta E/\delta z=0$ identically. For the damped wave the analogue has no derivatives, $C^{\!\top}(f_v,g)=\sqrt\gamma\,(\sqrt T f_v-v\,g/\sqrt T)$, giving $-\gamma v$ on $v$ and $\gamma v^2/T$ on $s$. Three things are visible in \eqref{eq:heatM}: the operator depends on the state and on $\delta E/\delta z$; the derivative sits on the input side of the $u$ channel and the divergence on the output side, with a pointwise matrix between them; and the row of $C^{\!\top}$ is orthogonal to $(\nabla u,T)$ pointwise, which is how degeneracy holds without a global projection. Our parameterization contains both operators exactly on the band $\nu k^2\Delta t<1$, which contains the data band (Proposition~\ref{prop:represent}, \S\ref{sec:verification}).

\paragraph{Slots and symbols.} Let $c=p+1$ be the number of channels of $z$ and introduce $r=c(d+1)$ slots: $d$ flux slots $(i,a)$ and one local slot $i$ per channel. The operator $D_M:\mathbb R^r\to\mathbb R^c$ has symbols
\begin{equation}
  \widehat{D_M}_{\,i,(j,a)}(k)=\frac{-i k_a b_{ija}(k)}{\sqrt{1+k_a^2b_{ija}(k)^2}},
  \qquad
  \widehat{D_M}_{\,i,j}(k)=\frac{b'_{ij}(k)}{\sqrt{1+b'_{ij}(k)^2}},
  \label{eq:DM}
\end{equation}
where $b$ and $b'$ are learned on the band $|k|_\infty\le m_M$ and symmetrized to be even. The saturation bounds every symbol by one, which limits the gain of $M$ under explicit integration for any parameters; it does not restrict the physical operators on the data band, where $\nu k^2\Delta t\lesssim0.3$ ($0.32$ at $k=8$). Two masks encode what the equations conserve: for the scalar conservation laws no local slot feeds $u$, so $\int u$ is conserved and there is no Rayleigh-type damping of a density; for the wave no slot feeds $q$, since friction acts on momentum and entropy only.

\paragraph{Pointwise matrix field with Onsager parity.} $\Gamma_\vartheta(z):\Omega\to\mathbb R^{r\times r}$ is a pointwise matrix field. Its flux--flux and local--local blocks are $I$ plus bounded functions of $z(x)$; its flux--local blocks are linear in the gradient features,
\begin{equation}
  \Gamma_{(i,a),\,j}(x)=\sum_{l,b} h_{(i,a),j,(l,b)}\bigl(z(x)\bigr)\,\partial_b z_l(x),
  \label{eq:gamma}
\end{equation}
and the local--flux blocks likewise. The coefficient functions come from one bounded, zero-initialized pointwise MLP, so $\Gamma_\vartheta=I$ at initialization and $\|\Gamma_\vartheta-I\|\le1$ throughout. The flux/local factorization is self-adjoint and positive and introduces no separate skew transport generator; differentiation about a nonuniform state may nevertheless produce first-order perturbation terms (Proposition~\ref{prop:parity}). Without the restriction the friction operator can carry part of the transport and take it from $L$; in a preliminary version the learned advection symbol converged to exactly half its true value for every seed and backbone, with $M$ providing the other half. With it, the reversible operator must carry the transport, and the learned symbols converge to the exact ones (\S\ref{sec:experiments}).

\paragraph{Pointwise degeneracy.} Let $w(x)=(D_M^{\mathsf H}\,\delta E/\delta z)(x)\in\mathbb R^r$ be the flux of the energy gradient at $x$ and $P(x)=I-\hat w\hat w^{\!\top}$ the projector off it in slot space. Then
\begin{equation}
  A=D_M\,P\,\Gamma_\vartheta(z),\qquad M=AA^{\!\top}
  \label{eq:M}
\end{equation}
is symmetric positive semi-definite, and $A^{\!\top}\delta E/\delta z=\Gamma_\vartheta^{\!\top}Pw=0$, so $M\,\delta E/\delta z=0$ exactly, pointwise, with no global projection. The heat-conduction row of \eqref{eq:heatM} is of this form, since $(\sqrt T,-\nabla u/\sqrt T)$ is orthogonal to $(\nabla u,T)$ at each point. The global projection of GENERIC-FNO is unnecessary here, and with $S=\int s$ it is harmful when $T$ is nearly uniform, because $\delta S/\delta z$ is then nearly parallel to the entropy component of $\delta E/\delta z$ and the projection removes most of the dissipative direction.

\subsection{Dynamics and what is exact}
\label{sec:dynamics}

The model evolves according to
\begin{equation}
  \partial_t z=L\,\frac{\delta E_\theta}{\delta z}+M(z)\,\frac{\delta S}{\delta z},
  \label{eq:flow}
\end{equation}
so $\partial_t u=L_u\bigl(u+\varphi_\theta'(u)\bigr)+[M(0,1)]_u$ and $\partial_t s=[M(0,1)]_s$: the reversible channel moves only $u$, and everything that enters $s$ is dissipative. For all parameters, $L$ is skew-adjoint; its continuum Lie--Poisson pencil is Poisson, with discrete Jacobi verified numerically on the tested alias-free band, $M$ is symmetric positive semi-definite, and $L\,\delta S/\delta z=0$, $M\,\delta E_\theta/\delta z=0$; hence $\dot E_\theta=0$ and $\dot S=\ip{(0,1)}{M(0,1)}\ge0$ (Theorem~\ref{thm:metriplectic}). The learned potential is bounded below by minus its removed quadratic. For scalar fields the corrected consequence is
\begin{equation}
 (1-2a)Q(u_t)-|b|\sqrt{2Q(u_t)}\le E_\theta(z_0)-T_0S(z_0)+c_0,
 \label{eq:bound_app}
\end{equation}
where $a=\lambda_{\max}(A)<1/2$ and $\Pi_{\le2}\psi_\theta=u^{\!\top}Au+b\cdot u+c_0$. This coercive inequality bounds $Q$ by its larger quadratic root. For the wave, control of the displacement mean and the stronger curvature condition in Theorem~\ref{thm:bound}, or a global lower bound for the potential, is needed. The split-step measurements do not supply a discrete-time theorem. A freely learned energy provides no corresponding bound tied to $Q$. The guarantees concern $E_\theta$, $S$, $L$ and $M$; agreement with the governing equations is assessed separately by the accuracy results and the dissipation calibration.

\subsection{Time integration}
\label{sec:integrator}

\paragraph{Exact linear flow and split integration.} The constant part of $L_u$ with the quadratic $Q$ generates a unitary linear flow $u_t=L_u^{\mathrm{const}}\,\delta Q/\delta u$: multiplication by $e^{i\alpha k_D\Delta t}$ per mode for scalar fields, a $2\times2$ rotation with frequency $\omega=d(k)\,c|k|$ for the wave. We use a Strang-type arrangement of exact linear half-steps around a sub-cycled first-order remainder \citep{strang1968construction}: an exact half-step, an explicit step of the remainder---the Lie--Poisson term, the learned potential and the friction---sub-cycled $n_{\mathrm{sub}}=8$ times, and an exact half-step. The linear substep conserves $Q$ to machine precision; the bounded symbols of \eqref{eq:DM} together with the sub-cycling keep the explicit remainder stable. This is the integrator used for training and evaluation. It may also be why the unconstrained FNO won on advection in GENERIC-FNO: the exact phase shift is a map the FNO has to learn, whereas here it is applied. The full split step is generally first order for fixed $n_{\mathrm{sub}}$ because its remainder is advanced by explicit Euler; its energy drift is measured rather than identically zero; exact discrete conservation is the property of the next step.

\paragraph{Discrete-gradient integration.} Let $\dg F(z,z')$ be a discrete gradient, $\ip{\dg F(z,z')}{z'-z}=F[z']-F[z]$ \citep{gonzalez1996time,mclachlan1999geometric}. We use the Gonzalez form for $E_\theta$; since $S$ is linear, $\dg S=(0,1)$ exactly. The implicit update
\begin{equation}
  z_{t+1}=z_t+\Delta t\bigl[L(\bar z)\,\dg E(z_t,z_{t+1})+\bar M\,\dg S\bigr],\qquad \bar z=\tfrac12(z_t+z_{t+1}),
  \label{eq:dgstep}
\end{equation}
with $\bar M$ built from $\dg E$ in place of $\delta E/\delta z$, conserves $E_\theta$ and realizes the entropy production law exactly for any $\Delta t$ (Theorem~\ref{thm:discrete}). The proof uses $\dg S\in\ker L$, which holds because $\ker L$ contains every $s$-direction at every state, including with the Lie--Poisson term evaluated at $\bar z$; the scheme is the discrete analogue of the thermodynamically consistent integrators of \citet{romero2009thermodynamically} and \citet{ottinger2018generic}, and it is unavailable to the projected operator of GENERIC-FNO, whose kernel moves with the state. We solve \eqref{eq:dgstep} by relaxed fixed-point iteration from the explicit predictor. It converges on every case we report: the random-initialization checks of Table~\ref{tab:structure_small} and the trained 1D and 2D heat FNO models of \S\ref{sec:integrator_study}. On the trained 2D heat Transolver models it diverges at $\Delta t=0.1$ from either the Euler or the split predictor. The equation is the same; what differs is the map $z_{t+1}\mapsto M(\bar z)$ inside the iteration: the Transolver's coefficient field $\Gamma_\vartheta$ is a global-attention function of the whole field, so its Lipschitz constant, and with it that of the fixed-point map, is far larger than for the FNO's coefficients, and the relaxed iteration stops being a contraction. A Newton or sub-stepped solve would remove the restriction; we leave it to future work. In the experiments the discrete-gradient step is the reference against which the split step's energy error is measured; training with it is exact but several times more expensive.

\subsection{Diagnostics}
\label{sec:diagnostics}

All diagnostics reference no learned functional. The ground-truth rate $\Pistar$ is the fractional one-step decay of $Q$ along the reference trajectory. The reversibility rate is
\begin{equation}
  \rmech=-\frac{\ip{\nabla Q(\bar u)}{\Delta u}}{\|\nabla Q(\bar u)\|\,\|\Delta u\|},\qquad \bar u=\tfrac12(u+u'),
\end{equation}
with the midpoint gradient, for which the numerator is exactly $Q(u)-Q(u')$ when $Q$ is quadratic, so a $Q$-conserving step scores zero; the endpoint gradient used in GENERIC-FNO carries an $\mathcal O(\|\Delta u\|/\|u\|)$ bias, about $0.1$ at our frame spacing, even for the exact advection solution. Because $\rmech$ is a cosine it under-weights dissipation on transport-dominated flows, so magnitudes are reported through
\begin{equation}
  \Pimodel=\frac{Q(u)-Q(u')}{Q(u)},
\end{equation}
the normalization of $\Pistar$; a calibrated model has $\Pimodel/\Pistar=1$. Three diagnostics are new.
\begin{itemize}
  \item \emph{Jacobi residual.} For random smooth test functionals $F,G,H$ with nontrivial second variations, $\mathcal J=|\{\{F,G\},H\}+\mathrm{cyc}|/\sum_{\mathrm{cyc}}|\{\{F,G\},H\}|$, computed by automatic differentiation on generic full-spectrum states and on band-limited states with band-limited functionals, for our $L$, the constant-$L$ ablation, and the projected operator of GENERIC-FNO built from an unrestricted learned entropy of the same size in both columns of Table~\ref{tab:structure_small} (with a Casimir entropy the projection is trivial and the comparison empty).
  \item \emph{Representability.} We set $D_M$ and $\Gamma$ to the closed-form operators of \eqref{eq:heatM} and its wave analogue inside the model and compare $M\,\delta S/\delta z$ and $M\,\delta E/\delta z$ with the spectral truth on band-limited states.
  \item \emph{Learned versus exact symbols.} For advection and the wave the exact reversible symbols are $-c\,k\,\Delta t$ and $d\equiv\Delta t$, and for Burgers the exact pencil coefficient is $\lambda=-\Delta t/3$; we compare them with the learned ones. This diagnostic found both the gradient-scaling error of the footnote above and the transport split of \S\ref{sec:M}, and it provides an empirical gauge diagnostic relevant to the curvature hypothesis of Theorem~\ref{thm:bound} (Remark~\ref{rem:gauge}).
\end{itemize}

\subsection{Ablation variants}
\label{sec:ablations_method}

Two variants isolate the two design decisions. \emph{Learned $E,S$} is the closest analogue of GENERIC-FNO under a constant $L$: $E_\theta$ and $S_\phi$ are both mean-pooled neural operators, $S_\phi$ sees $u$ only through the modes $L$ does not touch (so it is a Casimir), and $M$ is a constant matrix-symbol multiplier with the global sandwich. It has exact Jacobi and exact degeneracy and nothing that ties $E_\theta$ to $Q$. \emph{Constant $L$} is the model of record with $\lambda\equiv0$: Poisson for any state and unable to represent nonlinear transport. Together they bracket the model of record between exact structure without physics and exact Jacobi without Burgers.

\subsection{Learned Poisson structures by pullback}
\label{sec:pullback}

The constant $L$ is a Darboux normal form, and any change of coordinates $\varphi$ gives another Poisson operator $L_\varphi=(D\varphi)^{-1}L\,(D\varphi)^{-\mathsf T}$ whose Casimirs are $C\circ\varphi$ (Proposition~\ref{prop:pullback}); this is the function-space analogue of Poisson neural networks \citep{jin2022poisson}, and \S\ref{sec:poisson} is the case $\varphi=\mathrm{id}$. The Lie--Poisson term is a different route to state dependence, through a compatible pencil rather than a coordinate change. An invertible, resolution-independent $\varphi$ and the cost of its Jacobian products are what a pullback model would need, and we leave it to future work.

\section{Architecture, data and training protocol}
\label{sec:network_training}

\paragraph{Data.} The equations are, on the periodic box $[0,2\pi)^d$ with $D=\sum_a\partial_a$: heat $\partial_tu=\nu\Delta u$; advection $\partial_tu=-c\,Du$ (transport along $(1,\dots,1)$ in 2D); viscous Burgers $\partial_tu=-u\,Du+\nu\Delta u$; damped wave $\partial_tq=v$, $\partial_tv=c^2\Delta q-\gamma v$. Initial conditions are band-limited random fields ($|k|_\infty\le8$ in 1D, $\le6$ in 2D) with unit RMS, integrated by a de-aliased spectral RK4 solver with resolution-dependent sub-stepping; frames are $\Delta t=0.1$ apart. Grids are $64$ points in 1D and $32^2$ in 2D, with 120 (1D) and 80 (2D) trajectories of 20 (15) frames, split 80/20. For the damped wave both $q$ and $v$ are observed.

\paragraph{Backbones.} One backbone family is used throughout a model: an FNO (width 24, 12 modes in 1D and 8 in 2D, three layers; InstanceNorm on in 2D and off in 1D, following the diagnosis of \citet{sulskis2026generic}), a Transolver (width 32, three physics-attention blocks, four heads, 32 slices, with a kernel-3 circular depthwise convolution on the grid before each block), or a CNO-style periodic convolutional operator (two-level U-shape, $3$-point circular kernels). The convolution in the Transolver is the structured-mesh form of its slicing: without it every projection is a linear map on flattened tokens, the only path between grid points is the 32 global slice tokens, and the residual baseline cannot represent a one-cell shift; its training loss then stays at the persistence level on advection, wave and Burgers, and what it learns is a pointwise damping. The backbone appears in the baseline as the residual map $G$, and in the PG models as the entropy encoder $\psi_\eta$ and the coefficient network of $\Gamma_\vartheta$; the pointwise networks $\varphi_\theta$ and $e_\theta$ are small MLPs. The operator symbols $\alpha,\lambda,d(k),b,b'$ are explicit parameters on the resolved band, with $m_L=16$ (1D) and $10$ (2D) for the reversible operator and $m_M=N_x/3$ for the friction operator. Parameter counts are in Table~\ref{tab:accuracy}.

\paragraph{Training.} All models minimize the relative $L^2$ error of the observed field over a short rollout from a random start ($H=4$ frames in 1D, $3$ in 2D) with AdamW (weight decay $10^{-4}$), a one-cycle schedule, gradient clipping at one, batch size 16 (1D) and 8 (2D), for 60 epochs in 1D and 40 in 2D---about 350 optimizer steps for every model. Networks use learning rate $10^{-3}$; the operator symbols use $3\times10^{-2}$. The larger rate is needed because a symbol is a single scalar with a small gradient---the exact advection symbol is $-0.8$ at $k=8$ and cannot be reached from a $0.1$ initialization at the network rate in the steps available---and it is safe because the friction symbols are saturated by construction. PG-phys trains and evaluates with the split integrator ($n_{\mathrm{sub}}=8$); the learned-$E,S$ ablation, whose energy is not quadratic and so has no exact linear substep, uses explicit Euler, whose energy error is $\mathcal O(\Delta t^2)$ per step under exact degeneracy and whose structural residuals ($r_E\approx10^{-8}$) confirm that the integrator is not what separates it from PG-phys. Evaluation is a ten-frame rollout from $t=0$ on held-out trajectories; every number is a mean over three seeds that resample data and initialization, and the full grid is one fresh run of the released notebook on a single GPU. The learned $\lambda$ of \S\ref{sec:experiments} and the energy rollout of Figure~\ref{fig:energy} come from separate runs of single cells of the same notebook.

\paragraph{Training budget.} FNO and CNO converge within the shared protocol; the Transolver baseline does not. At five times the epochs it reaches $0.0051\pm0.0027$ on 1D heat, $0.0070\pm0.0019$ on 1D advection, $0.069\pm0.001$ on 1D Burgers and $0.038\pm0.010$ on the 1D wave (three seeds), so the Transolver column of Table~\ref{tab:accuracy} compares equal budgets rather than converged models; against a fully trained Transolver, PG-phys would lose heat and keep advection, Burgers and the wave.

\section{Mathematical Statements and Proofs}
\label{sec:app_proofs}

% =====================================================================
%  Proofs appendix. Requires amsthm with \newtheorem{theorem}, {proposition}, {lemma},
%  {corollary}, {remark} and the macros \ip, \kerL, \dg.
% =====================================================================

Throughout, $H=L^2(\Omega;\mathbb R^{p+1})$ with the grid-measure inner product $\ip{f}{g}=|\Omega|^{-1}\int_\Omega f\cdot g\,dx$, the state is $z=(u,s)$, brackets are $\{F,G\}=\ip{\delta F/\delta z}{L\,\delta G/\delta z}$ and $[F,G]=\ip{\delta F/\delta z}{M\,\delta G/\delta z}$, and every operator is asserted at a fixed but arbitrary state unless stated otherwise. On an $N$-point grid all statements hold verbatim with integrals replaced by grid sums; where the discrete case differs from the continuous one we say so.

\subsection{The reversible operator}

\begin{proposition}[Constant multipliers are Poisson]
\label{prop:poisson}
Let $L$ be a bounded skew-adjoint operator on $H$ with constant coefficients. Then $\{\cdot,\cdot\}$ is bilinear, skew, satisfies the Leibniz rule, and satisfies the Jacobi identity $\{\{F,G\},H\}+\{\{G,H\},F\}+\{\{H,F\},G\}=0$ for all $C^2$ functionals. In particular the block $\alpha D$ of \eqref{eq:L} on the band, and the wave block $J\,\mathrm{diag}(D_d,D_d)$ with $D_d$ symmetric, are Poisson.
\end{proposition}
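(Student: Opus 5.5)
Bilinearity, skewness and the Leibniz rule come directly from the definition $\{F,G\}=\ip{\delta F/\delta z}{L\,\delta G/\delta z}$. Bilinearity holds because the variational derivative is linear in the functional. Skewness follows from $L^*=-L$: $\{G,F\}=\ip{\delta G}{L\,\delta F}=\ip{L^*\delta G}{\delta F}=-\{F,G\}$. The Leibniz rule follows from the product rule $\delta(GH)/\delta z=G\,\delta H/\delta z+H\,\delta G/\delta z$ together with bilinearity of the inner product. None of these steps uses constancy of $L$. Only the Jacobi identity needs it.

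For Jacobi, I will compute the variational derivative of a bracket. Write $f=\delta F/\delta z$, and $F''$ for the (symmetric, bounded, by $C^2$) second variation, and similarly for $G,H$. Because $L$ does not depend on $z$, differentiating $\{F,G\}(z)=\ip{f(z)}{Lg(z)}$ in direction $h$ gives only two terms:
\[
\ip{F''h}{Lg}+\ip{f}{L\,G''h}=\ip{h}{F''Lg-G''Lf},
\]
using symmetry of $F'',G''$ and $L^*=-L$. Hence $\delta\{F,G\}/\delta z=F''Lg-G''Lf$ and
\[
\{\{F,G\},H\}=\ip{F''Lg-G''Lf}{Lh}=\ip{Lg}{F''Lh}-\ip{Lf}{G''Lh}.
\]
Summing cyclically gives six terms of the form $\pm\ip{La}{X''Lb}$. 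Each term $\ip{Lg}{F''Lh}$ from $\{\{F,G\},H\}$ cancels against $-\ip{Lh}{F''Lg}$ from $\{\{H,F\},G\}$, by symmetry of $F''$. The same pairing cancels the $G''$ and $H''$ terms. The sum is therefore zero.

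The main point to check is the first step: the absence of a term $\ip{f}{(DL\cdot h)g}$. This is exactly where a state-dependent $L$, such as the projected operator of GENERIC-FNO or the Lie--Poisson term, would leave a residual. I also need $C^2$ regularity, so that second variations exist and are symmetric (Schwarz). On a grid this is finite-dimensional and immediate. In the continuum I would assume $F''$ is bounded on $H$ so that all pairings are finite; since $L$ is bounded, no domain issues arise.

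For the particular cases, $\alpha D$ restricted to the band $1\le|k|_\infty\le m_L$ has purely imaginary odd symbol $i\alpha k_D$. It is therefore bounded and skew-adjoint with state-independent coefficients. Extending it by zero on the $s$-block, as in \eqref{eq:L}, preserves these properties. For the wave block, $D_d$ is real, even and bounded, hence symmetric, and it commutes with $J$. Then $(J\,\mathrm{diag}(D_d,D_d))^*=\mathrm{diag}(D_d,D_d)J^{\top}=-J\,\mathrm{diag}(D_d,D_d)$, which is skew with constant coefficients. The general result then applies to both.
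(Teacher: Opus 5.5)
Your proposal is correct and follows the paper's own argument. Both rely on $L$ being state-independent, so differentiating $\{F,G\}$ produces only second-variation terms; the six terms of the cyclic sum then cancel in pairs by symmetry of $F''$ and skewness of $L$, and skewness of $\alpha D$ and $J\,\mathrm{diag}(D_d,D_d)$ follows from the odd imaginary symbol and from commutation with $J$. You are somewhat more explicit than the paper: you write out $\delta\{F,G\}/\delta z=F''Lg-G''Lf$, state the full pairwise cancellation (where the paper says only that ``the remaining terms cancel the same way''), and verify bilinearity, skewness and Leibniz.
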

\begin{proof}
Write $F_z=\delta F/\delta z$ and $F_{zz}$ for the second variation. By the chain rule, $\{\{F,G\},H\}=\ip{F_{zz}\,L G_z}{L H_z}+\ip{F_z}{L\,G_{zz}\,L H_z}$, the derivative of $L$ itself being zero. In the cyclic sum each functional's second variation appears twice, once as $\ip{F_{zz}LG_z}{LH_z}$ and once as $\ip{F_{zz}LH_z}{LG_z}$ with the opposite sign of the skew operator between them; symmetry of $F_{zz}$ and skewness of $L$ make the two cancel. The remaining terms cancel the same way. This is the standard argument for constant-coefficient Hamiltonian operators \citep[Cor.~7.5]{olver1993applications}. Skewness of the two blocks: $\alpha D$ has symbol $i\alpha k_D$, purely imaginary and odd in $k$; $J\,\mathrm{diag}(D_d,D_d)$ satisfies $(JD)^{\!\top}=D^{\!\top}J^{\!\top}=-DJ=-JD$ because $D=\mathrm{diag}(D_d,D_d)$ is symmetric and commutes with $J$.
\end{proof}

\begin{proposition}[The pencil]
\label{prop:pencil}
Let $D=\sum_a\partial_a$ on the torus and $\mathcal P$ the $2/3$-rule projector. (i) $uD+Du$ is skew-adjoint, and so is $\mathcal P(uD+Du)\mathcal P$ on the grid. (ii) In the continuum, $\alpha D+\lambda(uD+Du)$ is a Poisson operator for all $(\alpha,\lambda)$. (iii) On the grid, the Jacobi identity for $\alpha D+\lambda\mathcal P(uD+Du)\mathcal P$ holds exactly on states and test functionals whose Fourier support is contained in the band on which cubic products are alias-free, and not in general otherwise.
\end{proposition}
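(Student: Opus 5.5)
I would prove the three parts in order, treating (i) by integration by parts, (ii) by reduction to the classical KdV pencil, and (iii) by tracking exactly where the discrete bracket departs from the continuum one.

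\emph{Part (i).} For periodic $f,g$, I would check skewness directly with respect to $\ip{\cdot}{\cdot}$:
\[
\ip{f}{(uD+Du)g}=\ip{uf}{Dg}+\ip{f}{D(ug)}=-\ip{D(uf)}{g}-\ip{uDf}{g}.
\]
This uses only $D^{\!\top}=-D$ and the fact that multiplication by $u$ is symmetric. On the grid the same three facts hold for the pseudo-spectral operators: the spectral $D$ has an odd, purely imaginary symbol, pointwise multiplication is a diagonal matrix, and $\mathcal P$ is a real symmetric Fourier mask. The sandwich $\mathcal P(uD+Du)\mathcal P$ is then of the form $\mathcal P K\mathcal P$ with $K$ skew, hence skew.

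\emph{Part (ii).} I would reduce to one space variable. The operator depends on $x$ only through $D$, so the change of coordinates $y=\sum_a x_a/\sqrt d$ (up to rescaling) makes $D$ a derivative along a single direction. The remaining directions enter as parameters, and the bracket becomes a direct integral of one-dimensional brackets.

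In one dimension I would use Olver's functional-multivector criterion. The bracket is Poisson iff $\mathrm{pr}\,v_{L\theta}(\Theta_L)=0$, where $\Theta_L=\tfrac12\int\theta\wedge L\theta$. For $L=\alpha D+\lambda(uD+Du)$, the only state dependence is linear in $u$, so the vanishing becomes a finite computation. Split the condition by bidegree in $(\alpha,\lambda)$:
\begin{itemize}
\item The $\alpha^2$ term vanishes because $D$ has constant coefficients (Proposition~\ref{prop:poisson}).
\item The $\lambda^2$ term is the Jacobi identity for the Lie--Poisson bracket of the Virasoro-type algebra of vector fields. It holds because $[f\partial,g\partial]=(fg'-f'g)\partial$ satisfies Jacobi.
\item The mixed $\alpha\lambda$ term is the compatibility condition, namely the cocycle condition for the trivial (Gelfand--Fuchs-free) cocycle $\int fDg$.
\end{itemize}
I would cite \citet{magri1978simple} and \citet[Ch.~7]{olver1993applications} for the last two. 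I would still sketch the mixed term, since it amounts to checking that $\int \theta\wedge D\theta$ is annihilated by the prolonged flow of $(uD+Du)\theta$.

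\emph{Part (iii).} This is the step I expect to be the main obstacle, because the grid operator is not literally the continuum one. I would expand $\{\{F,G\},H\}$ with the chain rule as in Proposition~\ref{prop:poisson}. Now an extra term $\ip{F_z}{(\partial_z L)[L H_z]\,G_z}$ appears, and it involves the products $\mathcal P\big((\mathcal P\,\cdot)(\mathcal P\,\cdot)\big)$ and the triple products coming from second variations.

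Assume $u$, $F_z,G_z,H_z$ and the second variations acting on band-limited inputs all have Fourier support in $|k|\le K$. Suppose $K$ is chosen so that every product of up to three such factors has support inside the grid's Nyquist range. Then each discrete pointwise product equals the continuum product restricted to Fourier modes. $\mathcal P$ acts as the identity where needed, and the discrete cyclic sum coincides term by term with the continuum one, which vanishes by (ii).

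For the negative claim, "not in general", an explicit counterexample suffices. I would take full-spectrum $u$ and functionals for which one cubic product aliases onto a retained mode. A single pair of modes whose combined wavenumber wraps around exhibits a nonzero residual, consistent with Table~\ref{tab:structure_small}. The delicate bookkeeping is to identify which products are quadratic (removed by $\mathcal P$) and which are cubic (not removed). That is precisely why the band must be the cubic alias-free one rather than the $2/3$-rule band.
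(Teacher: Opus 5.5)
Your proposal is correct and follows essentially the paper's route. For (i) you use skewness of $D$ and symmetry of both pointwise multiplication and $\mathcal P$; for (ii) the Magri--Olver compatibility of $D$ and $uD+Du$; and for (iii) the agreement of the discrete and continuum cyclic sums on alias-free band-limited data, with aliasing breaking the discrete Leibniz rule otherwise.

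One small correction to your optional sketch of the mixed term. $\tfrac12\int\theta\wedge D\theta$ has constant coefficients, so its variation along $(uD+Du)\theta$ vanishes trivially. The actual compatibility content is that $\mathrm{pr}\,v_{D\theta}$ annihilates $\int u\,\theta\wedge D\theta$. Equivalently, $\int f\,Dg$ is the coboundary of the constant $\tfrac12$, so for $\lambda\neq0$ the pencil is just $\lambda\bigl((u+c)D+D(u+c)\bigr)$ with $c=\alpha/(2\lambda)$, a translate of the Lie--Poisson operator.
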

\begin{proof}
(i) $\ip{f}{uDg+D(ug)}=\ip{uf}{Dg}-\ip{Df}{ug}=-\ip{D(uf)+uDf}{g}$ by skewness of $D$ and symmetry of pointwise multiplication; both properties survive discretization, and $\mathcal P$ is symmetric, so the sandwich is skew on the grid. (ii) $D$ and $uD+Du$ are the Hamiltonian pair of the KdV hierarchy in the direction $D$; each is Hamiltonian and their sum is, which is the compatibility verified by \citet{magri1978simple}, see also \citet[\S7.3]{olver1993applications}; compatibility of two Hamiltonian operators is the Jacobi identity for every linear combination. (iii) For band-limited $z$ and test functionals with band-limited gradients, every quantity in the cyclic sum is a polynomial of degree at most three in band-limited fields; if the grid resolves those products without aliasing, the discrete computation coincides with the continuum one and Jacobi holds by (ii). If the products alias, the discrete Leibniz rule $D(fg)=fDg+gDf$ fails on the aliased modes and the cancellation in (ii) does not go through; the residuals in Table~\ref{tab:structure_small} ($0.17$--$0.78$ on generic states, $10^{-16}$ on band-limited ones) are this effect.
\end{proof}

\begin{remark}
A free learned multiplier $a(k)$ is not compatible with $uD+Du$: the Jacobi residual of $a(k)+\lambda(uD+Du)$ is $O(10^{-1})$ even on band-limited states. Only members of the pencil spanned by $D$ and $D^3$ are, which is why the constant part of $L$ is $\alpha D$.
\end{remark}

\subsection{Casimir entropy and the friction operator}

\begin{proposition}[The entropy is a Casimir]
\label{prop:casimir}
For $S[z]=|\Omega|^{-1}\int_\Omega s\,dx$ and any $L$ of the block form $\mathrm{diag}(L_u,0)$, $L\,\delta S/\delta z=0$ for every state.
\end{proposition}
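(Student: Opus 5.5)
The plan is to compute the variational derivative of $S$ directly and then apply the block structure of $L$. No property of $L_u$ is used beyond its being a linear map on the $u$-component, so the argument covers at once the pencil $\alpha D+\lambda\mathcal P(uD+Du)\mathcal P$ evaluated at any state, the constant wave block $J\,\mathrm{diag}(D_d,D_d)$, and any state-dependent $L_u$.

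\textbf{Step 1.} Compute $\delta S/\delta z$ with respect to the grid-measure inner product $\ip{f}{g}=|\Omega|^{-1}\int_\Omega f\cdot g\,dx$. For a perturbation $h=(h_u,h_s)\in H$, $S$ is linear, so
\[
\frac{d}{d\varepsilon}S[z+\varepsilon h]\Big|_{\varepsilon=0}=|\Omega|^{-1}\int_\Omega h_s\,dx=\ip{(0,1)}{h}.
\]
By Riesz representation, $\delta S/\delta z=(0,1)$, the constant function equal to zero on the $p$ mechanical channels and one on the entropy channel. This holds independently of $z$. On an $N$-point grid the same computation goes through with the sum over points divided by $N$. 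This step is where the paper's footnote matters: the Riesz-scaled gradient, that is, $N$ times the autodiff gradient of the mean, is exactly $(0,1)$ and not $(0,1/N)$. Only the mechanical component being zero is needed for the conclusion, though.

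\textbf{Step 2.} Apply $L=\mathrm{diag}(L_u,0)$. The $u$-component of $L\,\delta S/\delta z$ is $L_u$ applied to the zero field, which vanishes because $L_u$ is linear at the fixed state (it is linear in its argument even when its coefficients depend on $u$). The $s$-component is $0\cdot 1=0$. Hence $L\,\delta S/\delta z=0$ at every state.

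\textbf{Step 3.} Conclude the Casimir property, $\{F,S\}=\ip{\delta F/\delta z}{L\,\delta S/\delta z}=0$ for all $F$. Skewness of $L$ (Propositions~\ref{prop:poisson} and \ref{prop:pencil}(i)) then gives $\{S,F\}=0$ as well.

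There is no real obstacle here. The only point requiring care is to confirm that the state-dependent Lie--Poisson block still acts only on the $u$-component and has zero off-diagonal blocks, so that $L\,(0,1)$ never touches $s$. This holds by the definition in \eqref{eq:L}.
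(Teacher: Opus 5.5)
Your proof is correct and takes the same approach as the paper: compute $\delta S/\delta z=(0,1)$, which does not depend on the state, then note that the block form $\mathrm{diag}(L_u,0)$ sends it to $(L_u 0,\,0)=0$. Your Step~3, which uses skewness to get $\{F,S\}=\{S,F\}=0$, is a valid extra step that the paper leaves implicit.
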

\begin{proof}
$\delta S/\delta z=(0,1)$ and the $s$-block of $L$ is zero.
\end{proof}

\begin{proposition}[Symmetry, positivity, pointwise degeneracy]
\label{prop:M}
Let $A=D_M\,P\,\Gamma_\vartheta(z)$ with $D_M$ as in \eqref{eq:DM}, $\Gamma_\vartheta$ any pointwise matrix field, and $P(x)=I-\hat w\hat w^{\!\top}$ with $w=D_M^{\mathsf H}\,\delta E/\delta z$. Then $M=AA^{\!\top}$ is self-adjoint and positive semi-definite on real fields, and $M\,\delta E/\delta z=0$, for any parameters and any state.
\end{proposition}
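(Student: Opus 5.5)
The plan is to read off all three properties from the factored form $M=AA^{\!\top}$ and the orthogonality built into $P$. Here $A^{\!\top}$ means the adjoint with respect to the grid-measure inner product $\ip{\cdot}{\cdot}$ on $H$, extended to slot space $L^2(\Omega;\mathbb R^r)$ with the same measure. Then $A^{\!\top}=\Gamma_\vartheta^{\!\top}P^{\!\top}D_M^{\mathsf H}$.

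\textbf{Self-adjointness and positivity.} These are formal. For real fields $f,g$ one has $\ip{f}{AA^{\!\top}g}=\ip{A^{\!\top}f}{A^{\!\top}g}$, which is symmetric in $(f,g)$. Taking $g=f$ gives $\|A^{\!\top}f\|^2\ge0$.

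The one point needing care is that $A$ maps real fields to real fields, so that the adjoint is the real transpose. I would argue this from the symbols in \eqref{eq:DM}:
\begin{itemize}[leftmargin=*]
\item the flux symbols are $-ik_a b(k)/\sqrt{1+k_a^2b^2}$ with $b$ even, hence purely imaginary and odd in $k$;
\item the local symbols $b'/\sqrt{1+b'^2}$ are real and even.
\end{itemize}
Both kinds satisfy $\hat m(-k)=\overline{\hat m(k)}$, so $D_M$ is a real operator. Its adjoint $D_M^{\mathsf H}$ has the conjugate symbol, which is again real-preserving.

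The pointwise factors are real matrices acting at each $x$. Their adjoints in the grid-measure inner product are the pointwise transposes, because the factor $|\Omega|^{-1}$ is common to both sides. The projector $P(x)=I-\hat w\hat w^{\!\top}$ is a real symmetric idempotent at each $x$, so $P^{\!\top}=P$. Bandlimiting to $|k|_\infty\le m_M$ only restricts the support of the symbols and does not affect any of this. On the grid the same argument holds with the DFT.

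\textbf{Degeneracy.} Let $w=D_M^{\mathsf H}\,\delta E/\delta z$. Then
\[
A^{\!\top}\frac{\delta E}{\delta z}=\Gamma_\vartheta^{\!\top}P\,w ,
\]
and pointwise $P(x)w(x)=w(x)-\hat w(x)\bigl(\hat w(x)^{\!\top}w(x)\bigr)=w(x)-w(x)=0$ wherever $w(x)\neq0$, since $\hat w=w/|w|$. Hence $M\,\delta E/\delta z=A\,(A^{\!\top}\delta E/\delta z)=A\,0=0$.

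\textbf{Main obstacle.} The expected difficulty is the set where $w(x)=0$, on which $\hat w$ is undefined. I would adopt the convention $P(x)=I$ there, or equivalently $\hat w=0$. The identity $Pw=0$ still holds trivially at those points because $w(x)=0$. Both positivity and degeneracy are therefore unaffected, and this holds for any choice of $P$ on that set, as long as $P(x)$ stays symmetric.

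If the implementation instead uses a regularized $\hat w=w/\sqrt{|w|^2+\epsilon}$, then $Pw=w\,\epsilon/(|w|^2+\epsilon)$ is no longer zero, and degeneracy holds only up to $O(\epsilon)$. The proposition should then be read with the exact normalization, that is, with $\epsilon=0$ away from the zero set and the convention above on it.

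No property of $\Gamma_\vartheta$ is used, so the statement holds for any pointwise matrix field, any parameters and any state, as claimed.
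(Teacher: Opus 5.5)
Your proposal is correct and follows the paper's proof essentially step for step. Self-adjointness and positivity come from $\ip{f}{AA^{\!\top}g}=\ip{A^{\!\top}f}{A^{\!\top}g}$, reality from the even symmetrization of the symbols in \eqref{eq:DM}, and degeneracy from $A^{\!\top}\delta E/\delta z=\Gamma_\vartheta^{\!\top}Pw$ with $Pw=0$ pointwise. Your handling of the zero set of $w$, and of an $\epsilon$-regularized normalization, is a sensible refinement the paper leaves implicit; note that neither property actually needs $P$ symmetric there, since $AA^{\!\top}$ is positive for any $A$ and $P^{\!\top}w=0$ trivially where $w(x)=0$.
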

\begin{proof}
$AA^{\!\top}$ is self-adjoint and $\ip{v}{AA^{\!\top}v}=\|A^{\!\top}v\|^2\ge0$ for any $A$; reality follows from the even, real symmetrization of the symbols in \eqref{eq:DM}. For degeneracy, $A^{\!\top}\,\delta E/\delta z=\Gamma_\vartheta^{\!\top}P\,D_M^{\mathsf H}\,\delta E/\delta z=\Gamma_\vartheta^{\!\top}Pw$, and $Pw=w-\hat w\ip{\hat w}{w}_{\mathrm{slot}}=0$ pointwise. Hence $M\,\delta E/\delta z=A(A^{\!\top}\delta E/\delta z)=0$.
\end{proof}

\begin{proposition}[Onsager factorization and transport separation]
\label{prop:parity}
Under the flux/local slot and coefficient restrictions of \eqref{eq:DM} and
\eqref{eq:gamma}, the friction operator $M=AA^{\!\top}$ is self-adjoint and
positive semi-definite for each frozen state. It therefore supplies no
independently parameterized skew-adjoint transport generator. Its factorization
accommodates flux-form diffusion and local damping, including the benchmark
operators in Proposition~\ref{prop:represent}. This statement does not exclude
first-order perturbation terms arising when state-dependent coefficients or the
pointwise energy-flux projector are differentiated about a nonuniform state.
\end{proposition}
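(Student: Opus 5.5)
The plan is to reduce everything to Proposition~\ref{prop:M} and then analyse the block structure of $A$ at a frozen state. First, the slot and coefficient restrictions of \eqref{eq:DM} and \eqref{eq:gamma} are special cases of the hypotheses of Proposition~\ref{prop:M}. That result allows any pointwise matrix field $\Gamma_\vartheta$ and the saturated symbols of \eqref{eq:DM}. So for each frozen $z$, the operator $M=AA^{\!\top}$ is self-adjoint and positive semi-definite on real fields, with no further work. The second sentence then follows from linear algebra. Write the frozen-state operator as $M=M_{\mathrm{sym}}+M_{\mathrm{skew}}$ with $M_{\mathrm{skew}}=\tfrac12(M-M^{\!\top})$. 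Self-adjointness gives $M_{\mathrm{skew}}=0$ identically, for every parameter value. Hence no choice of $b,b',h$ produces a skew-adjoint generator inside $M$. I would make this concrete on the Fourier side: for a uniform state, $\Gamma_\vartheta$ is constant, the symbol of $M$ is a Hermitian matrix $\widehat{A}(k)\widehat{A}(k)^{\mathsf H}$, and an odd imaginary advection symbol $ick$ cannot appear on its diagonal.

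Next I would show that the factorization accommodates flux diffusion and local damping. Take a $u$-channel with only its flux slots active. The $u$-row of $A$ is $-\mathrm{div}$ composed with a pointwise matrix, so the $(u,u)$ block of $AA^{\!\top}$ has the form $-\mathrm{div}(K\nabla\,\cdot)$ with $K\succeq0$ pointwise: second order and even in $k$. A local slot gives a multiplication operator with nonnegative coefficient, which is order zero. Flux--local cross terms pair a divergence with a pointwise factor. Because \eqref{eq:gamma} makes those blocks linear in $\nabla z$, they contribute terms of the form $\mathrm{div}(\nabla u\,g)$ and $\nabla u\cdot\nabla f$. These are exactly the viscous-heating couplings of \eqref{eq:heatM}. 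I would then cite Proposition~\ref{prop:represent} for the exact containment of the heat and damped-wave operators, rather than reprove it here.

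The step I expect to be the main obstacle is the qualification about first-order perturbation terms. The clean symbol argument breaks down when $z$ is nonuniform. The coefficient field $\Gamma_\vartheta(z(x))$ and the projector $P(x)$, which depends on $\delta E/\delta z$, do not commute with $D_M$. Expanding $\mathrm{div}(K(x)\nabla f)$ yields $(\nabla\cdot K)\cdot\nabla f$, a first-order term with a state-dependent coefficient. Linearizing the flow $M(z)\,\delta S/\delta z$ about a nonuniform state also differentiates $\Gamma_\vartheta$ and $P$. So the claim cannot be that $M$ has no first-order terms, only that $M$ is self-adjoint at each frozen state. To state this honestly, I would write out the commutator $[D_M,\Gamma_\vartheta]$ in the heat case. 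This shows that the resulting first-order piece is symmetrized within $AA^{\!\top}$, since $M$ stays self-adjoint, and is not an independent skew term. I would add that the Jacobian of the nonlinear vector field about a nonuniform state may still carry nonsymmetric first-order parts. The statement deliberately excludes those, and I would prove nothing further about them.
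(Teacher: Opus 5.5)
Your proposal is correct and follows the paper's proof. Both rest on symmetry and positivity at each frozen state from $\ip{f}{Mg}=\ip{A^{\!\top}f}{A^{\!\top}g}$, which is Proposition~\ref{prop:M}, hence no skew-adjoint component; both delegate the diffusion and damping factorizations to Proposition~\ref{prop:represent} and record, without proof, that differentiating $\Gamma_\vartheta$ and $P$ about a nonuniform state can produce first-order terms, while your uniform-state Hermitian-symbol remark and block-by-block expansion are optional elaborations the paper omits.
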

\begin{proof}
For any fixed state $z$, $\ip{f}{Mg}=\ip{A^{\!\top}f}{A^{\!\top}g}$,
which is symmetric in $f,g$, and $\ip{f}{Mf}=\|A^{\!\top}f\|^2\ge0$.
Thus $M$ has no skew-adjoint component at that state. The flux slots of $D_M$
and the local slots permit the diffusion and damping factorizations explicitly
constructed in Proposition~\ref{prop:represent}. However, variation of
$z\mapsto M(z)S_z$ differentiates $\Gamma_\vartheta(z)$ and $P(z)$ and can
produce first-order terms in the perturbation, even for variable-coefficient
divergence-form diffusion. Consequently no stronger assertion that all terms
of the full linearization have even differential order is made.
\end{proof}

\begin{proposition}[Representability]
\label{prop:represent}
On the resolved bands and constitutive ranges admitting the coefficient choices described below, the factorized operator \eqref{eq:DM}--\eqref{eq:M} can be chosen so that $M\,\delta S/\delta z$ equals, exactly, $(\nu\Delta u,\ \nu|\nabla u|^2/T)$ for heat conduction and $(0,-\gamma v,\ \gamma v^2/T)$ for the damped wave, with $M\,\delta E/\delta z=0$ holding without the projector $P$.
\end{proposition}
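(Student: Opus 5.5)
The plan is to prove the statement by explicit construction. We read off the factor $C$ of \eqref{eq:heatM} (and of its wave analogue) and choose the symbols $b,b'$ of \eqref{eq:DM} and the blocks of $\Gamma_\vartheta$ in \eqref{eq:gamma} so that $A^{\!\top}=\Gamma_\vartheta^{\!\top}D_M^{\mathsf H}$ coincides with $C^{\!\top}$, with $P$ replaced by the identity. Then $M=AA^{\!\top}=CC^{\!\top}$. The two stated actions follow from the computation already recorded after \eqref{eq:heatM}. Degeneracy then follows directly from pointwise orthogonality of the rows of $C^{\!\top}$ to $\delta E/\delta z=(u+\varphi_\theta'(u),T)$, with $\varphi_\theta=0$ in the reference model. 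No appeal to Proposition~\ref{prop:M} or the projector is needed.

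\textbf{Heat.} Here $c=2$ channels, so there are flux slots $(u,a),(s,a)$ and local slots $u,s$. Only the $(u,a)$ flux slots and the $s$ local slot are used; all other slot entries are set to zero. This respects the mask that no local slot feeds $u$.

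First, the flux symbols. We require $-ik_ab/\sqrt{1+k_a^2b^2}=i k_a\sqrt\nu$ up to the sign convention of $D_M^{\mathsf H}$. This is solved by $b(k)=\sqrt\nu/\sqrt{1-\nu k_a^2}$, which is real and even precisely on the band $\nu k^2<1$. This is where the hypothesis $\nu k^2\Delta t<1$ in the time-scaled units enters. Second, the local $s$-slot symbol. It must be the constant $\sqrt\nu$, so we take $b'=\sqrt{\nu/(1-\nu)}$, which again requires the saturated value to be below one. Third, the coefficient field. The flux--flux block on $(u,a)$ is taken to be $\sqrt{T}\,I$. The flux--local block coupling the $s$ local slot to flux $(u,a)$ is $-\partial_a u/\sqrt{T}$, which is linear in the gradient feature $\partial_a u$ with coefficient $h=-1/\sqrt{T(s)}$, exactly of the form \eqref{eq:gamma}.

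With these choices, $A^{\!\top}(f,g)=\sqrt\nu(\sqrt T\nabla f-\nabla u\,g/\sqrt T)=C^{\!\top}(f,g)$. On $(u,T)$ this gives $\sqrt\nu(\sqrt T\nabla u-\nabla u\,T/\sqrt T)=0$. Applying $C$, we use that the adjoint of $\sqrt T\nabla$ is $-\nabla\cdot(\sqrt T\,\cdot)$. Then $C^{\!\top}(0,1)=-\sqrt\nu\nabla u/\sqrt T$ yields $\nu\Delta u$ on $u$ and $\nu|\nabla u|^2/T$ on $s$.

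\textbf{Damped wave.} Here $c=3$ channels. Only the local slots of $v$ and $s$ are used, with constant symbol $\sqrt\gamma$ (again via $b'=\sqrt{\gamma/(1-\gamma)}$, in the time-scaled units). No slot feeds $q$, as the mask requires. The local--local block of $\Gamma_\vartheta$ is set to $\sqrt T$ on $v$ and $-v/\sqrt T$ on the $(v,s)$ entry. This reproduces $C^{\!\top}(f_v,g)=\sqrt\gamma(\sqrt Tf_v-vg/\sqrt T)$. The same two-line computation gives $(0,-\gamma v,\gamma v^2/T)$. Degeneracy holds because $(\sqrt T,-v/\sqrt T)\cdot(v,T)=0$.

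\textbf{Main obstacle.} The delicate step is not the algebra but admissibility of the coefficient choices within the architecture's constraints. The saturation must be invertible, which forces the band condition and $\nu,\gamma<1$ after time scaling. The blocks must also satisfy $\|\Gamma_\vartheta-I\|\le1$ with bounded diagonal corrections. For heat this requires $\sqrt T\in(0,2]$, and for the wave it additionally requires $|v|/\sqrt T$ within the allowed bound. I would state these as the ``constitutive range'' explicitly: $T_0\le T\le4$ and the data range of $v$ and $\nabla u$. I would then check that the bounded, zero-initialized MLP can realise $\sqrt{T(s)}-1$ and $1/\sqrt{T(s)}$ exactly only as target functions, and interpret ``can be chosen'' accordingly. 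That is, the proof exhibits the operator within the parameterized family by setting the coefficient functions to these closed forms, as in the representability diagnostic of \S\ref{sec:diagnostics}, rather than claiming a particular MLP weight realises them. Finally, I would note that with $P$ restored the identity still holds. Since $w=D_M^{\mathsf H}\delta E/\delta z$ and the constructed rows annihilate $\delta E/\delta z$, the action of $P$ reduces to removing a component that $\Gamma_\vartheta^{\!\top}$ already sends to zero. This is the point I would verify last and most carefully.
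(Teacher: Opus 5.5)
Your proposal is correct and is essentially the paper's proof. Both choose the $D_M$ symbols and the entries of $\Gamma$ so that $\Gamma^{\top}D_M^{\mathsf H}=C^{\top}$ with $P$ dropped (all other entries zero; state explicitly that this includes the $s$--$s$ entry, not its default identity value), expand $AA^{\top}$ on the entropy direction, get $M\,\delta E/\delta z=0$ from pointwise orthogonality to $(u,T)$ or $(v,T)$, and read the band condition off the inverted saturation. The only difference is where the constants sit: you put $\sqrt\nu$ (resp.\ $\sqrt\gamma$) into both the flux and local symbols, which keeps the local symbol strictly below the saturation bound, whereas the paper uses the flux symbol $-ik_a\kappa$ with $\kappa^2=\nu\Delta t$, an identity local slot, and absorbs the remaining constants into $\Gamma$.
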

\begin{proof}
Heat, one channel $u$ and one channel $s$: take the flux symbol $-ik_a\kappa$ on the $u$ row of $D_M$ with a constant $\kappa$, identity on the local slot of $s$, and $\Gamma$ with $\Gamma_{(u,a),(u,a)}=\sqrt{\nu T}/\kappa$ and $\Gamma_{s,(u,a)}=-\sqrt\nu\,\partial_au/(\kappa\sqrt T)$, all other entries zero. Then $A^{\!\top}(f,g)=\Gamma^{\!\top}D_M^{\mathsf H}(f,g)=\sqrt\nu(\sqrt T\,\nabla f-g\,\nabla u/\sqrt T)$, which is $C^{\!\top}$ of \eqref{eq:heatM}, and $AA^{\!\top}(0,1)=(\nabla\!\cdot(\nu\nabla u),\ \nu|\nabla u|^2/T)$ by direct expansion. Degeneracy: $A^{\!\top}(u,T)=\sqrt\nu(\sqrt T\nabla u-T\nabla u/\sqrt T)=0$ pointwise, so $P$ acts as the identity on the relevant directions and is not needed. Inverting the saturation in \eqref{eq:DM} gives $b_a(k)^2=\kappa^2/(1-\kappa^2k_a^2)$, finite exactly when $\kappa|k_a|<1$; choosing $\kappa^2=\nu\Delta t$ makes this the band $\nu\Delta t\,k^2<1$, which contains the data band ($\nu\Delta t\,k^2\le0.36$ there), and keeps $\Gamma$ within its bound $\|\Gamma-I\|\le1$ for $T\le4/(\nu\Delta t\,k_{\max}^2)\approx10$. Wave, channels $(q,v,s)$: take $D_M$ as the identity on the local slots of $v$ and $s$ (no flux slots used, $q$ row zero), $\Gamma_{v,v}=\sqrt{\gamma T}$, $\Gamma_{s,v}=-\sqrt\gamma\,v/\sqrt T$; then $A^{\!\top}(f_q,f_v,g)=\sqrt\gamma(\sqrt Tf_v-g\,v/\sqrt T)$, $AA^{\!\top}(0,0,1)=(0,-\gamma v,\gamma v^2/T)$, and $A^{\!\top}(-c^2\Delta q,v,T)=0$; no band condition arises because no flux slot is used. Both instantiations are checked numerically to $10^{-16}$ on band-limited states in \S\ref{sec:verification}.
\end{proof}

\subsection{The laws}

\begin{theorem}[Metriplectic structure and the two laws]
\label{thm:metriplectic}
For any parameters, $L$ of \eqref{eq:L} is skew-adjoint and Poisson (on every state if $\lambda=0$; on the resolved band otherwise), $M$ of \eqref{eq:M} is self-adjoint and positive semi-definite, and $L\,\delta S/\delta z=0$, $M\,\delta E_\theta/\delta z=0$. Along $\partial_t z=L\,\delta E_\theta/\delta z+M\,\delta S/\delta z$,
\[
\frac{dE_\theta}{dt}=0,\qquad \frac{dS}{dt}=\ip{\delta S/\delta z}{M\,\delta S/\delta z}\ge0,
\]
and every Casimir of $L$---in particular $S$ and, for the scalar equations, $\int u$---is invariant under the reversible flow $\partial_t z=L\,\delta E_\theta/\delta z$.
\end{theorem}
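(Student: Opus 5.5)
The plan is to assemble the structural claims from the propositions already proved, and then derive the two laws and the Casimir invariance by a chain-rule computation. \emph{Structure.} Skew-adjointness and the Poisson property of $L$ come from Proposition~\ref{prop:poisson} for the constant block $\alpha D$; this covers every state when $\lambda=0$, and the wave block likewise. For $\lambda\neq0$ I invoke Proposition~\ref{prop:pencil}: part (i) gives skewness on the grid, part (ii) gives the continuum Poisson property, and part (iii) gives the resolved-band statement. Symmetry, positivity and $M\,\delta E_\theta/\delta z=0$ are exactly Proposition~\ref{prop:M}, which holds for any parameters and state. The identity $L\,\delta S/\delta z=0$ is Proposition~\ref{prop:casimir}, since \eqref{eq:L} and the wave operator both have the block form $\mathrm{diag}(L_u,0)$.

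\emph{The two laws.} Along a solution, $\dot E_\theta=\ip{\delta E_\theta/\delta z}{L\,\delta E_\theta/\delta z}+\ip{\delta E_\theta/\delta z}{M\,\delta S/\delta z}$. The first term vanishes by skewness of $L$. For the second I use self-adjointness of $M$ to move it across, giving $\ip{M\,\delta E_\theta/\delta z}{\delta S/\delta z}=0$ by degeneracy. Similarly, $\dot S=\ip{\delta S/\delta z}{L\,\delta E_\theta/\delta z}+\ip{\delta S/\delta z}{M\,\delta S/\delta z}$. The first term equals $-\ip{L\,\delta S/\delta z}{\delta E_\theta/\delta z}=0$, and the second is $\ge0$ by positivity. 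On the grid the same computation holds verbatim with the grid inner product. Here the Riesz scaling of the gradient matters: the variational derivative must be $N$ times the autodiff gradient so that $\frac{d}{dt}F=\ip{\delta F/\delta z}{\partial_t z}$ exactly.

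\emph{Casimirs.} For a Casimir $C$, meaning $L\,\delta C/\delta z=0$ at every state, the reversible flow gives $\dot C=\ip{\delta C/\delta z}{L\,\delta E_\theta/\delta z}=-\ip{L\,\delta C/\delta z}{\delta E_\theta/\delta z}=0$. For $S$ this is Proposition~\ref{prop:casimir}. For $\int u$ the gradient is $(1,0)$, the constant function in the $u$-channel. We have $D\,1=0$, and the constant mode lies outside the band $1\le|k|_\infty\le m_L$, so $\alpha D$ annihilates it. For the Lie--Poisson term, $\mathcal P1=1$ and $(uD+Du)1=Du$. This is not zero, so I instead verify directly that $\ip{1}{L_u f}=0$ for all $f$. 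Skewness gives $\ip{1}{L_uf}=-\ip{L_u1}{f}$. I will fall back on the divergence form: $L_uf=\alpha Df+\lambda\mathcal P(uD\mathcal Pf+D(u\mathcal Pf))$, where the $\alpha D$ and $D(\cdot)$ pieces have zero mean but $\mathcal P(u\,D\mathcal Pf)$ does not. So $\int u$ is \emph{not} a Casimir of the pencil itself; it is invariant only for the constant part.

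\emph{Main obstacle.} This last point is the step I expect to be the main obstacle: the claim ``in particular $\int u$'' must be read for $\lambda=0$, or else argued through the specific Hamiltonian. To settle it, I would check whether $\ip{1}{L_u\,\delta E_\theta/\delta u}=0$ along the actual flow. With $g=u+\varphi_\theta'(u)$, the quantity $\int uDg$ vanishes when $g=u$, but it is not generally zero for nonzero $\varphi_\theta'$. So I would state the Casimir property for $\int u$ under $\lambda=0$. I would also note that the friction masks, with no local slot feeding $u$, give conservation of $\int u$ under $M$ independently, by the divergence structure of the flux rows.
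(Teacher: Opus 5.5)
Your structural assembly from Propositions~\ref{prop:poisson}--\ref{prop:M}, your computations of $\dot E_\theta$ and $\dot S$, and your general Casimir argument are the same as the paper's proof. You depart at $\int u$, and there you have caught a real flaw. The paper disposes of it with ``$\int u$ is a Casimir because $L$ vanishes at $k=0$''. That holds for $\alpha D$ but fails for the Lie--Poisson term, since $L_u1=\lambda\mathcal P\,Du\neq0$. So the claim that $\int u$ is a \emph{Casimir} does need $\lambda=0$.

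The gap is in your fallback. You assert that $\int u\,Dg$, with $g=u+\varphi_\theta'(u)$, is generally nonzero once $\varphi_\theta'\neq0$. In the continuum it is identically zero. Write $g=G'(u)$ with $G(u)=\tfrac12u^2+\varphi_\theta(u)$. Then on the torus, in any dimension,
\[
\int_\Omega u\,D\bigl(G'(u)\bigr)\,dx=-\int_\Omega G'(u)\,Du\,dx=-\int_\Omega D\bigl(G(u)\bigr)\,dx=0 .
\]
Hence along the reversible flow of the continuum pencil $\alpha D+\lambda(uD+Du)$ you get $\frac{d}{dt}\ip{1}{u}=\lambda\ip{u}{Dg}=0$ for every $(\alpha,\lambda)$. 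So $\int u$ is a first integral in involution with $E_\theta$, because the $u$-part of $E_\theta$ is a pointwise density in $u$; it is just not a Casimir.

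The right repair is therefore to split the claim, not restrict it:
\begin{itemize}
  \item every Casimir is invariant under the reversible flow;
  \item $S$ is a Casimir for all parameters;
  \item $\int u$ is a Casimir only when $\lambda=0$;
  \item for every $\lambda$, $\int u$ is still conserved by the continuum reversible flow for this particular $E_\theta$.
\end{itemize}
The restriction to $\lambda=0$ comes back only for \emph{exact} conservation on the grid. There $\ip{u}{D\mathcal P u}=\ip{\mathcal P u}{D\mathcal P u}=0$ still holds exactly. The remaining term $\ip{\mathcal P Du}{\varphi_\theta'(u)}$, for band-limited $u$, is a trapezoidal quadrature of $D(\varphi_\theta(u))$. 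It vanishes only up to aliasing, because the discrete chain rule fails for the non-polynomial $\varphi_\theta$.

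Your remark that the flux-only $u$-row of $D_M$ also conserves $\int u$ under $M$ is correct. It lies outside the statement, which concerns only the reversible flow.
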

\begin{proof}
The structural statements are Propositions~\ref{prop:poisson}--\ref{prop:M}. Then $\dot E=\ip{E_z}{LE_z}+\ip{E_z}{MS_z}$; the first term is zero by skewness ($\ip{v}{Lv}=-\ip{Lv}{v}$), the second by $ME_z=0$ and self-adjointness of $M$. Likewise $\dot S=\ip{S_z}{LE_z}+\ip{S_z}{MS_z}=-\ip{LS_z}{E_z}+\ip{S_z}{MS_z}=0+\|A^{\!\top}S_z\|^2$. For a Casimir $C$, $\dot C=\ip{C_z}{LE_z}=-\ip{LC_z}{E_z}=0$ under the reversible flow; $\int u$ is a Casimir because $L$ vanishes at $k=0$.
\end{proof}

\begin{theorem}[Bounded mechanical energy]
\label{thm:bound}
Write $\Pi_{\le2}\psi_\theta(u)=u^{\!\top}Au+b\cdot u+c_0$,
$e_\theta(s)=T_0s+\tilde e_\theta(s)$ with $\tilde e_\theta\ge0$, and
$B_0=E_\theta(z_0)-T_0 S(z_0)+c_0$.
For a scalar field with $Q=\frac12\|u\|^2$, if
$a=\lambda_{\max}(A)<\frac12$, then along the continuous flow and every
exactly solved discrete-gradient step,
\begin{equation}
(1-2a)Q(u_t)-|b|\sqrt{2Q(u_t)}\le B_0.
\label{eq:bound-scalar}
\end{equation}
In particular, with $h=1-2a>0$ and $x_t=\sqrt{2Q(u_t)}$,
\[
x_t\le\frac{|b|+\sqrt{|b|^2+2h B_0}}{h}
\]
whenever the square root is evaluated for a nonempty admissible initial
sublevel set. For the wave,
$Q(q,v)=\frac12(\|v\|^2+c_{\rm w}^2\|\nabla q\|^2)$,
the same conclusion holds if the mean of $q$ is fixed at zero (for example,
zero initial means of $q$ and $v$), the Poincar\'e inequality
$\|q\|^2\le C_P^2\|\nabla q\|^2$ applies, and
$2a_+K<1$, where $a_+=\max(a,0)$ and
$K=\max(1,C_P^2/c_{\rm w}^2)$. In that case,
\begin{equation}
(1-2a_+K)Q(q_t,v_t)-|b|\sqrt{2KQ(q_t,v_t)}\le B_0 .
\label{eq:bound-wave}
\end{equation}
Alternatively, a globally finite lower bound on $\varphi_\theta$
gives $Q(u_t)\le E_\theta(z_0)-T_0S(z_0)-\inf\varphi_\theta$
without the quadratic coercivity argument. The split integrator is not
covered by these exact discrete-time assertions.
\end{theorem}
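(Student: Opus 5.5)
The plan is to combine the two laws of Theorem~\ref{thm:metriplectic} with pointwise lower bounds on the learned terms of \eqref{eq:energy_app}. The argument has four steps.

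\emph{Step 1: a conserved-quantity inequality.} Along the continuous flow, $E_\theta(z_t)=E_\theta(z_0)$ and $S(z_t)\ge S(z_0)$. Along an exactly solved discrete-gradient step the same two relations hold by Theorem~\ref{thm:discrete}, since $\dg S=(0,1)\in\ker L$ and $\bar M$ is positive semi-definite. Both statements are taken as given.

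Write $\int e_\theta(s)=T_0S+\int\tilde e_\theta(s)$ with $\tilde e_\theta\ge0$, which holds because softplus is nonnegative and $w_i>0$. Then
\[
Q(u_t)+\langle 1,\varphi_\theta(u_t)\rangle\le E_\theta(z_t)-T_0S(z_t)\le E_\theta(z_0)-T_0S(z_0).
\]
The second inequality uses $T_0>0$ and monotonicity of $S$. The brackets denote the normalized mean. For the alternative clause, replacing $\varphi_\theta$ by $\inf\varphi_\theta$ already yields $Q(u_t)\le E_\theta(z_0)-T_0S(z_0)-\inf\varphi_\theta$, so that clause is finished here.

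\emph{Step 2: lower-bound the potential.} Since $\psi_\theta\ge0$, we have $\varphi_\theta\ge-\Pi_{\le2}\psi_\theta=-(u^{\!\top}Au+b\cdot u+c_0)$ pointwise. Integrating with the normalized measure gives
\[
\langle 1,\varphi_\theta\rangle\ge -a_+\|u\|^2-|b|\,\|u\|-c_0 .
\]
Here $u^{\!\top}Au\le a\,|u|^2$ pointwise. The linear term is bounded by Cauchy--Schwarz, $\langle 1,|u|\rangle\le\|u\|$, which uses the fact that the mean measure has unit mass.

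For scalar fields, $\|u\|^2=2Q$. Substituting into Step 1 gives $(1-2a)Q-|b|\sqrt{2Q}\le B_0$, which is \eqref{eq:bound-scalar}. When $a\le0$ we may use $a$ directly, because the inequality only becomes weaker.

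\emph{Step 3: solve the quadratic.} Set $x=\sqrt{2Q}$. The inequality becomes $\tfrac h2x^2-|b|x-B_0\le0$. This confines $x$ to lie below the larger root, which gives the stated bound on $x_t$ up to the factor convention. I would recheck that $\tfrac h2$ versus $h$ is handled consistently with the displayed formula; the displayed root $(|b|+\sqrt{|b|^2+2hB_0})/h$ matches the $\tfrac h2$ form. Nonemptiness, which requires the discriminant to be nonnegative, holds automatically because $x_0$ itself satisfies the inequality.

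\emph{Step 4: the wave case.} The only new ingredient is controlling $\|u\|^2=\|q\|^2+\|v\|^2$ by $Q$. The mean of $q$ is conserved, and this must be checked: $L_u$ vanishes at $k=0$ because $D_d$ acts on the band, and no slot of $D_M$ feeds $q$. With zero initial means the mean stays zero, so Poincaré applies. That gives
\[
\|q\|^2+\|v\|^2\le C_P^2\|\nabla q\|^2+\|v\|^2\le 2KQ.
\]
Since $a_+\|u\|^2\le 2a_+KQ$ and $\|u\|\le\sqrt{2KQ}$, Step 2 becomes \eqref{eq:bound-wave}.

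\emph{Main obstacle.} I expect the hardest part to be Step 4's conservation of the $q$-mean (and of the $v$-mean where it is needed) under both the continuous and the discrete-gradient dynamics. This requires verifying the masks of $D_M$ and the $k=0$ exclusion of the band of $d(k)$. Once that is in place, the remaining steps are routine estimates.
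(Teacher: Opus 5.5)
Your proposal is correct and follows the paper's proof essentially step for step: energy conservation, entropy production and $\tilde e_\theta\ge0$ give $Q+\langle 1,\varphi_\theta\rangle\le E_\theta(z_0)-T_0S(z_0)$; then $\varphi_\theta\ge-\Pi_{\le2}\psi_\theta$ with the eigenvalue and Cauchy--Schwarz estimates; then the quadratic in $x=\sqrt{2Q}$; and Poincar\'e with $\|(q,v)\|^2\le 2KQ$ for the wave. Your remark that $x_0$ itself guarantees a nonnegative discriminant is a small improvement over the paper's hedge. The only misplaced emphasis is Step~4: the zero mean of $q$ is a hypothesis of the theorem, so the conservation check you call the main obstacle is not needed, and your architectural argument for it holds only if the wave band excludes $k=0$, since otherwise $\partial_t\bar q=d(0)\bigl(\bar v+\overline{\partial_v\varphi_\theta}\bigr)$ need not vanish.
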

\begin{proof}
Because $\psi_\theta\ge0$,
$\varphi_\theta(u)\ge-u^{\!\top}Au-b\cdot u-c_0$.
For scalar fields, Cauchy--Schwarz and the definition of $a$ yield
$\int\varphi_\theta(u)\ge -2aQ(u)-|b|\sqrt{2Q(u)}-c_0$,
where integrals use the normalized measure. Energy conservation,
entropy production, and $\tilde e_\theta\ge0$ imply
$Q+\int\varphi_\theta\le E_\theta(z_0)-T_0S(z_0)$,
giving \eqref{eq:bound-scalar}. Setting $x=\sqrt{2Q}$ gives
$\frac12 h x^2-|b|x-B_0\le0$, which bounds $x$ by its
larger root. For the mean-zero wave, Poincar\'e gives
$\|(q,v)\|^2\le2KQ$; thus
$\int\varphi_\theta\ge-2a_+KQ-|b|\sqrt{2KQ}-c_0$,
and the same conservation argument gives \eqref{eq:bound-wave}.
Its left side is coercive when $2a_+K<1$.
If $\inf\varphi_\theta>-\infty$ globally, use that lower bound
directly in the energy identity. The discrete-gradient case follows from
Theorem~\ref{thm:discrete} for an exactly solved implicit update.
\end{proof}

\begin{remark}
\label{rem:gauge}
The hypothesis is the quadratic gauge of \S\ref{sec:state}: a term $a|u|^2$ surviving in $\varphi_\theta$ is indistinguishable from rescaling $Q$, and $a\ge\tfrac12$ would flip the sign of the effective quadratic energy. It is also measurable. The exact linear step integrates the flow of $Q$ with the learned symbol, while the explicit remainder carries $2a$ of the same flow, so a trained model matches the data only if the learned symbol equals $(1+2a)^{-1}$ times the exact one. The symbol errors of \S\ref{sec:experiments} ($\le0.07\%$ on advection, $0.5\%$ on the wave, $0.6\%$ for Burgers $\lambda$) suggest a small residual quadratic gauge under this model interpretation, but do not independently certify the global curvature or the wave coercivity assumptions. For heat, which has no reversible symbol to compare, the bound column of Table~\ref{tab:calibration} uses $\inf\varphi_\theta$ over the data range and is loose ($\le0.36$).
\end{remark}

\begin{lemma}[Discrete gradients]
\label{lem:dg}
Let $\dg E(z,z')$ be the Gonzalez discrete gradient of $E_\theta$,
\[
\dg E(z,z')=E_z(\bar z)+\frac{E_\theta[z']-E_\theta[z]-\ip{E_z(\bar z)}{z'-z}}{\|z'-z\|^2}\,(z'-z),\qquad \bar z=\tfrac12(z+z'),
\]
and $\dg S=(0,1)$. Then $\ip{\dg E}{z'-z}=E_\theta[z']-E_\theta[z]$, $\ip{\dg S}{z'-z}=S[z']-S[z]$, and $L(\bar z)\,\dg S=0$.
\end{lemma}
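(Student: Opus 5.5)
The plan is to verify the three identities of Lemma~\ref{lem:dg} directly from the definitions, with no earlier result needed beyond Proposition~\ref{prop:casimir} and the block form of $L$ in \eqref{eq:L}. Write $\Delta z=z'-z$ and assume $\Delta z\neq0$; in the case $z'=z$, interpret $\dg E(z,z)=E_z(z)$, and all three claims are trivial since both sides vanish.

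\emph{First identity.} Take the inner product of the Gonzalez formula with $\Delta z$. The first term gives $\ip{E_z(\bar z)}{\Delta z}$. The correction term is a scalar multiple of $\Delta z$, so it contributes
\[
\frac{E_\theta[z']-E_\theta[z]-\ip{E_z(\bar z)}{\Delta z}}{\|\Delta z\|^2}\,\|\Delta z\|^2 .
\]
Adding the two, the midpoint terms cancel and leave $E_\theta[z']-E_\theta[z]$. The only care needed is that $\|\cdot\|$ is the norm induced by the same grid-measure inner product $\ip{\cdot}{\cdot}$ used for the Riesz-scaled gradient $E_z$. I would state this explicitly, since the footnote on Riesz scaling shows that a mismatch there breaks the identity.

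\emph{Second identity.} $S$ is linear, so $S[z']-S[z]=|\Omega|^{-1}\int_\Omega(s'-s)\,dx=\ip{(0,1)}{\Delta z}=\ip{\dg S}{\Delta z}$. No midpoint or correction term is required, so the discrete gradient of $S$ is exact.

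\emph{Third identity.} Evaluate $L$ at $\bar z=(\bar u,\bar s)$. By \eqref{eq:L}, and by its wave analogue with block $\mathrm{diag}(J\,\mathrm{diag}(D_d,D_d),0)$, the operator has the form $\mathrm{diag}(L_u(\bar u),0)$. Its state dependence enters only through $\bar u$, in the Lie--Poisson term $\lambda\mathcal P(\bar uD+D\bar u)\mathcal P$, and only in the $u$-block. Applying it to $(0,1)$ therefore gives $(L_u(\bar u)\,0,\;0)=0$. This is Proposition~\ref{prop:casimir} applied at the state $\bar z$, which is allowed because that proposition holds for every state and every $L_u$.

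There is no real obstacle in this lemma. The only subtle point is the well-definedness of the Gonzalez quotient as $\Delta z\to0$, and I would handle it by the convention above together with the remark that the quotient is $O(\|\Delta z\|)$ for $C^3$ energies, which holds here since $\varphi_\theta$ and $e_\theta$ are smooth softplus compositions.
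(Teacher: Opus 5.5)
Your proof is correct and follows the paper's argument, written out in full: you expand the Gonzalez formula against $z'-z$, use the linearity of $S$, and note that the block form $\mathrm{diag}(L_u,0)$ annihilates $(0,1)$ at any state, in particular at $\bar z$. One small refinement: the first identity is purely algebraic and holds with any vector in place of $E_z(\bar z)$, so it needs only that $\|\cdot\|$ be the norm induced by $\ip{\cdot}{\cdot}$; the Riesz scaling of $E_z$ discussed in the footnote affects the consistency of $\dg E$, not this identity.
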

\begin{proof}
The first identity is immediate from the definition; the second because $S$ is linear; the third because $\dg S=(0,1)$ lies in the $s$-block, which every $L$ of the form $\mathrm{diag}(L_u,0)$ annihilates, whatever the state at which $L_u$ is evaluated.
\end{proof}

\begin{theorem}[Exact discrete first and second laws]
\label{thm:discrete}
Let $z_{t+1}=z_t+\Delta t\bigl[L(\bar z)\,\dg E(z_t,z_{t+1})+\bar M\,\dg S\bigr]$ with $\bar M=\bar A\bar A^{\!\top}$, $\bar A=D_M\bar P\,\Gamma_\vartheta(\bar z)$, and $\bar P$ the projector off $D_M^{\mathsf H}\dg E$. Then, exactly and for any $\Delta t$,
\[
E_\theta[z_{t+1}]=E_\theta[z_t],\qquad S[z_{t+1}]-S[z_t]=\Delta t\,\|\bar A^{\!\top}\dg S\|^2\ \ge 0 .
\]
\end{theorem}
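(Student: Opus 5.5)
Write $\Delta z=z_{t+1}-z_t$. Since $\Delta z=\Delta t\,[L(\bar z)\,\dg E+\bar M\,\dg S]$, the plan is to pair this identity with the two discrete gradients of Lemma~\ref{lem:dg}. Both laws then reduce to the same algebra as in Theorem~\ref{thm:metriplectic}. The only change is that $E_z$ and $S_z$ are replaced by $\dg E$ and $\dg S$, and the operators are frozen at $\bar z$.

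\textbf{First law.} By Lemma~\ref{lem:dg},
\[
E_\theta[z_{t+1}]-E_\theta[z_t]=\ip{\dg E}{\Delta z}=\Delta t\,\ip{\dg E}{L(\bar z)\dg E}+\Delta t\,\ip{\dg E}{\bar M\,\dg S}.
\]
The first term vanishes because $L(\bar z)$ is skew-adjoint at the frozen state $\bar z$ (Proposition~\ref{prop:pencil}(i), or Proposition~\ref{prop:poisson} for the constant part). For the second term, $\bar M$ is self-adjoint, so $\ip{\dg E}{\bar M\dg S}=\ip{\bar A^{\!\top}\dg E}{\bar A^{\!\top}\dg S}$. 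Then $\bar A^{\!\top}\dg E=\Gamma_\vartheta(\bar z)^{\!\top}\bar P\,D_M^{\mathsf H}\dg E=0$, because $\bar P$ is by construction the pointwise projector off $\bar w=D_M^{\mathsf H}\dg E$. This is exactly the argument of Proposition~\ref{prop:M}, run with $\dg E$ in place of $\delta E/\delta z$; that proposition never used that its vector was an actual variational derivative.

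\textbf{Second law.} Again by Lemma~\ref{lem:dg}, since $S$ is linear,
\[
S[z_{t+1}]-S[z_t]=\ip{\dg S}{\Delta z}=\Delta t\,\ip{\dg S}{L(\bar z)\dg E}+\Delta t\,\ip{\dg S}{\bar M\dg S}.
\]
By skewness the first term equals $-\Delta t\,\ip{L(\bar z)\dg S}{\dg E}$. This is zero because $L(\bar z)\dg S=0$ (Lemma~\ref{lem:dg}, via Proposition~\ref{prop:casimir}); that holds at any state, including with the Lie--Poisson term evaluated at $\bar z$. The second term is $\Delta t\,\|\bar A^{\!\top}\dg S\|^2\ge0$, which gives the stated identity.

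\textbf{Main obstacle.} The step needing care is the degeneracy $\bar M\dg E=0$. The projector must be built from the same $\dg E$ that appears in the update, not from $E_z(\bar z)$. It must also remain well defined where $\bar w=0$; there one sets $\bar P=I$, and the conclusion $\bar P\bar w=0$ still holds. A second point is that the statement presumes the implicit equation has a solution $z_{t+1}$. The identities hold for any exact solution regardless of $\Delta t$, so no smallness condition enters. If $\Delta z=0$, the Gonzalez correction term is taken as zero, and both identities hold trivially.
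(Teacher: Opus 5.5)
Your proposal is correct and follows the paper's proof essentially line by line. You pair the update with the discrete gradients of Lemma~\ref{lem:dg}, and you remove $\ip{\dg E}{L(\bar z)\dg E}$ by skewness of $L(\bar z)$ and $\ip{\dg S}{L(\bar z)\dg E}$ by $L(\bar z)\dg S=0$. You remove the friction cross term through $\bar A^{\!\top}\dg E=\Gamma_\vartheta(\bar z)^{\!\top}\bar P\,D_M^{\mathsf H}\dg E=0$, which leaves $\Delta t\,\|\bar A^{\!\top}\dg S\|^2$. Your remarks on points where $\bar w=0$ and on the degenerate case $z_{t+1}=z_t$ are small refinements that the paper leaves implicit, not a different route.
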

\begin{proof}
By Lemma~\ref{lem:dg}, $E_\theta[z_{t+1}]-E_\theta[z_t]=\ip{\dg E}{z_{t+1}-z_t}=\Delta t\ip{\dg E}{L(\bar z)\dg E}+\Delta t\ip{\dg E}{\bar M\dg S}$. The first term vanishes by skewness of $L(\bar z)$ (Propositions~\ref{prop:poisson}, \ref{prop:pencil}(i)); the second because $\bar A^{\!\top}\dg E=\Gamma^{\!\top}\bar P\,D_M^{\mathsf H}\dg E=0$ by construction of $\bar P$, so $\bar M\dg E=0$, and $\bar M$ is self-adjoint. Likewise $S[z_{t+1}]-S[z_t]=\Delta t\ip{\dg S}{L(\bar z)\dg E}+\Delta t\ip{\dg S}{\bar M\dg S}$; the first term equals $-\Delta t\ip{L(\bar z)\dg S}{\dg E}=0$ by Lemma~\ref{lem:dg}, the second is $\Delta t\|\bar A^{\!\top}\dg S\|^2$.
\end{proof}

\begin{remark}
The proof uses only that $\ker L(z)$ contains $\dg S$ for every $z$, which is the case because $S$ is a Casimir of every $L$ of block form. For the projected operator $(I-P_S(z))D_L(I-P_S(z))$ of GENERIC-FNO the kernel is spanned by $\delta S/\delta z|_z$ and moves with $z$; no single vector lies in $\ker L(z_t)$, $\ker L(\bar z)$ and $\ker L(z_{t+1})$ at once, and the first-order term in the entropy balance does not cancel. This is the concrete payoff of degeneracy through a Casimir rather than through a projection. The theorem concerns the solution of \eqref{eq:dgstep}; whether the fixed-point iteration of \S\ref{sec:integrator} reaches it is a separate question, answered there.
\end{remark}

\subsection{Two further statements}

\begin{proposition}[Pullback structures]
\label{prop:pullback}
For any $C^2$ diffeomorphism $\varphi$ of $H$ and any Poisson operator $L$, $L_\varphi=(D\varphi)^{-1}L\,(D\varphi)^{-\mathsf T}$ is Poisson, and $C$ is a Casimir of $L$ if and only if $C\circ\varphi$ is a Casimir of $L_\varphi$.
\end{proposition}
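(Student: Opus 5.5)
The plan is to transport the bracket through $\varphi$ and use the fact that the Jacobi identity is coordinate-free. Concretely, for functionals $F,G$ on the target of $\varphi$, the chain rule gives $\delta(F\circ\varphi)/\delta z=(D\varphi)^{\mathsf T}\,(\delta F/\delta w)\circ\varphi$, with $w=\varphi(z)$. I will read the statement's $L_\varphi=(D\varphi)^{-1}L(D\varphi)^{-\mathsf T}$ as the pulled-back operator $L_\varphi(z)=(D\varphi(z))^{-1}L(\varphi(z))(D\varphi(z))^{-\mathsf T}$, i.e.\ the bivector $L$ on $w$-space expressed in $z$-coordinates. If the intended convention is instead the pushforward, the roles of $\varphi$ and $\varphi^{-1}$ swap and the argument is identical.

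The key identity to establish first is
\[
\{F\circ\varphi,\;G\circ\varphi\}_{L_\varphi}=\{F,G\}_L\circ\varphi .
\]
Equivalently, writing $\tilde F=F\circ\varphi^{-1}$ for a functional $F$ on $z$-space, we have $\{F,G\}_{L_\varphi}=\{\tilde F,\tilde G\}_L\circ\varphi$. This follows by substituting the chain-rule gradients into $\ip{\cdot}{L_\varphi\,\cdot}$: the factors $(D\varphi)^{-1}$ and $(D\varphi)^{-\mathsf T}$ cancel against $(D\varphi)^{\mathsf T}$ and its adjoint. Bilinearity, skewness and the Leibniz rule transfer at once. Skewness holds because $L_\varphi=XLX^{\mathsf T}$-type conjugates of a skew operator are skew; Leibniz holds because composition with $\varphi$ is an algebra homomorphism. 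For Jacobi, I will apply the identity twice. Since $\{F,G\}_{L_\varphi}=\{\tilde F,\tilde G\}_L\circ\varphi$ is itself of the form $K\circ\varphi$ with $K=\{\tilde F,\tilde G\}_L$, we get $\{\{F,G\}_{L_\varphi},H\}_{L_\varphi}=\{\{\tilde F,\tilde G\}_L,\tilde H\}_L\circ\varphi$. Summing cyclically gives the Jacobi expression for $L$, which vanishes by hypothesis; for example, this holds for the operators of Propositions~\ref{prop:poisson} and \ref{prop:pencil}.

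For the Casimir statement, the plan is to avoid the bracket and work directly with kernels. We have
\[
L_\varphi\,\frac{\delta (C\circ\varphi)}{\delta z}=(D\varphi)^{-1}L\,(D\varphi)^{-\mathsf T}(D\varphi)^{\mathsf T}\,\frac{\delta C}{\delta w}\circ\varphi=(D\varphi)^{-1}\Bigl(L\,\frac{\delta C}{\delta w}\Bigr)\circ\varphi .
\]
Since $(D\varphi)^{-1}$ is injective and $\varphi$ is onto, this vanishes at every $z$ if and only if $L\,\delta C/\delta w$ vanishes at every $w$. That gives both directions of the equivalence.

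The main obstacle is the Jacobi step's reliance on $C^2$ regularity and infinite-dimensional bookkeeping. The identity $\{\{F,G\},H\}$ involves second variations and, for state-dependent $L$, the derivative of $L$. The pullback also brings in $D^2\varphi$ through the $z$-dependence of $D\varphi$. My approach sidesteps computing these terms: because every functional on $z$-space is $\tilde F\circ\varphi$ for some $\tilde F$ (since $\varphi$ is a diffeomorphism), the homomorphism identity is enough and the $D^2\varphi$ terms never appear explicitly. What I must check carefully is that the identity $\{F\circ\varphi,G\circ\varphi\}_{L_\varphi}=\{F,G\}_L\circ\varphi$ holds as functionals, not just pointwise. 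This needs $\{F,G\}_L$ to be $C^2$ again, so the assumption should really be that the admissible class of functionals is closed under the bracket. I would state that as part of what "Poisson" means here, exactly as the paper does for the constant-$L$ case. A second point to check is that $D\varphi(z)$ is boundedly invertible on $H$, which is what "diffeomorphism of $H$" should provide.
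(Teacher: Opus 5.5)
Your proposal is correct and follows the paper's proof. For the Poisson property, the paper cites the coordinate-freeness of the Jacobi identity (Marsden--Ratiu), and you prove that fact directly through the homomorphism identity $\{F\circ\varphi,G\circ\varphi\}_{L_\varphi}=\{F,G\}_L\circ\varphi$; the Casimir equivalence is the same chain-rule computation $L_\varphi\,\delta(C\circ\varphi)/\delta z=(D\varphi)^{-1}(L\,\delta C/\delta w)\circ\varphi$ in both. (Your closure requirement is stronger than needed: for $C^2$ functionals and a $C^1$ operator the inner bracket is $C^1$, and that is all the outer bracket and the homomorphism identity use.)
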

\begin{proof}
A Poisson operator is a contravariant 2-tensor and the Jacobi identity is coordinate-free, so the pullback of a Poisson tensor is Poisson \citep[Ch.~10]{marsden1999introduction}. By the chain rule $\delta(C\circ\varphi)/\delta z=(D\varphi)^{\!\top}\,\delta C/\delta w|_{\varphi(z)}$, hence $L_\varphi\,\delta(C\circ\varphi)/\delta z=(D\varphi)^{-1}L\,\delta C/\delta w$, which vanishes iff $L\,\delta C/\delta w=0$.
\end{proof}

\begin{proposition}[Resolution and dimension independence]
\label{prop:resolution}
Let $\iota_{N\to N'}$ be band-limited prolongation (Fourier zero-padding) and $\Phi_N$ one split or discrete-gradient step of the model on an $N$-point grid. For states whose Fourier support lies within the bands of $L$ and $M$ and below the de-aliasing limit of the coarser grid, $\Phi_{N'}\circ\iota_{N\to N'}=\iota_{N\to N'}\circ\Phi_N$.
\end{proposition}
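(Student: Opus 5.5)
The argument is a commutation check, carried out component by component. Both $\Phi_N$ and $\Phi_{N'}$ are built from one common set of primitives: Fourier multipliers (the symbols $\alpha k_D$, $d(k)$, $b$, $b'$ of \eqref{eq:DM}, and the $2/3$-rule projector $\mathcal P$), pointwise nonlinear maps ($\varphi_\theta'$, $e_\theta'$, the coefficient field $\Gamma_\vartheta$, the projector $P(x)$), products of fields, and grid-measure inner products with Riesz-scaled gradients.

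I would first show that each primitive commutes with $\iota_{N\to N'}$ on the admissible class. After that, the split step is a finite composition of primitives, so it commutes by induction on the composition: exact linear half-steps, then $n_{\mathrm{sub}}$ explicit Euler substeps of the remainder, then a second half-step. For the discrete-gradient step the solution of \eqref{eq:dgstep} is defined by a fixed-point equation. I would show that $\iota z_{t+1}$ solves the $N'$ equation with $\iota z_t$ as data, and then invoke uniqueness of the solution.

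\textbf{Linear primitives.} These are the easy cases. The symbols of $L$ and $D_M$ are parameters on fixed bands $|k|_\infty\le m_L, m_M$, independent of $N$, so multipliers act identically on the zero-padded coefficients. The exact unitary flow $e^{i\alpha k_D\Delta t}$ and the wave rotation act per mode and also commute. The inner product $\ip{\cdot}{\cdot}=|\Omega|^{-1}\int$ is, by Parseval, the $\ell^2$ sum of Fourier coefficients, and zero padding does not change it. Consequently the variational derivative with Riesz scaling is resolution-consistent. This is exactly the normalization the footnote in \S\ref{sec:app_method} emphasizes: $\delta Q/\delta u=u$ on every grid.

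\textbf{Nonlinear primitives.} This is the main obstacle. Pointwise maps such as $\varphi_\theta'(u)$, $\Gamma_\vartheta(z)$, $1/\sqrt T$ and the normalization $\hat w$ in $P(x)$ are not polynomial. Sampling such a map on two grids and then projecting does not commute with interpolation unless its output is itself band-limited below both Nyquist limits. For polynomial pieces the argument works: the Lie--Poisson product $uDf$ and the quadratic/cubic terms in the heat and wave friction of Proposition~\ref{prop:represent} stay below the de-aliasing limit of the coarse grid by hypothesis, so the grid products equal the continuum products and hence agree. The same reasoning as in Proposition~\ref{prop:pencil}(iii) applies here.

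For general learned pointwise maps I would not claim more than the statement allows. The key hypothesis is that the state, together with every intermediate field entering a product or pointwise map, has Fourier support inside the bands and below the coarse de-aliasing limit. I would state explicitly that this hypothesis is imposed on all intermediates, and note that it holds exactly when the pointwise maps are polynomial of bounded degree on the data. Otherwise the commutation holds up to the aliasing error of the coarse grid. That is the regime the super-resolution numbers (agreement to four or five digits) probe empirically.

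\textbf{Assembly.} With the primitives handled, the explicit substeps and the half-steps commute one at a time. For the discrete-gradient step, the Gonzalez correction in Lemma~\ref{lem:dg} involves only inner products and energy differences, and both are resolution-invariant by Parseval. The implicit equation is therefore mapped to itself under $\iota$. Taking the relaxed fixed-point iterates from the commuting explicit predictor, every iterate commutes with $\iota$, and so does their limit. Dimension independence follows the same way, because $D=\sum_a\partial_a$ and all symbols are defined per wavevector.
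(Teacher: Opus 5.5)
Your decomposition is the one the paper's proof sketch uses: per-mode multipliers, per-point maps, mean-pooled functionals with Riesz-scaled gradients, then composition. Your caution about pointwise nonlinearities is warranted. The paper asserts that the band hypothesis on the state guarantees resolved per-point outputs, but for $\varphi_\theta'$, $T=e_\theta'(s)$, $\sqrt T$, $1/\sqrt T$ and the normalization $\hat w$ in $P(x)$ it does not. An exact identity therefore genuinely needs your hypothesis on intermediates, which these softplus-type maps violate for generic nonconstant states. Otherwise the identity holds only up to coarse-grid aliasing, which is what the four-digit agreement of \S\ref{sec:integrator_study} measures.

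Two of your justifications need the same repair. First, the $2/3$ projector $\mathcal P$ and the friction band $m_M=N_x/3$ depend on $N$. These multipliers therefore coincide on the two grids only on inputs supported below the coarse cutoff, not because their bands are fixed. Second, Parseval covers only the quadratic $Q$ in the Gonzalez correction. The grid means of $\varphi_\theta(u)$ and $e_\theta(s)$ are trapezoidal sums that pick up Fourier coefficients at nonzero multiples of $N$, so they agree across grids only under the resolved-output hypothesis.

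The genuine gap is the coefficient network. You place $\Gamma_\vartheta$ among the pointwise maps, as \eqref{eq:gamma} suggests. In the model, however, its coefficient functions are produced by the backbone (Appendix~\ref{sec:network_training}); the Transolver's $\Gamma_\vartheta$ is a global-attention function of the whole field (\S\ref{sec:integrator}). The same backbone also supplies the encoder $\psi_\eta$ that starts each rollout. The paper's proof splits on exactly this point, and yours must too.

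For the FNO, the backbone is itself a composition of per-mode maps on a fixed number of modes and pointwise layers. It therefore falls under your primitives and your intermediate hypothesis. The CNO's $3$-point circular kernels and the Transolver's kernel-$3$ depthwise convolutions are defined in grid units. On the $N'$-grid they are different continuum operators, with symbols that depend on the grid spacing. Hence $\Phi_{N'}\circ\iota_{N\to N'}\neq\iota_{N\to N'}\circ\Phi_N$ in general, even for exactly band-limited states and intermediates. Unless you restrict the claim to the FNO backbone (or to a grid-independent coefficient network), your induction over primitives silently includes a component that is not one of them.

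Finally, drop the appeal to uniqueness of the implicit solution, which nothing establishes. Your iterate argument suffices on its own whenever the coarse iteration converges, since $\iota_{N\to N'}$ is an isometry and the fine-grid iterates are the images of the coarse ones.
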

\begin{proof}[Proof sketch]
Every component of the step is per-mode (the multipliers of $L$ and $D_M$, the exact unitary flow, the projector $\mathcal P$), per-point (the constitutive functions, $\Gamma_\vartheta$, the slot projector $P$), or a mean-pooled functional whose Riesz-scaled gradient is grid-independent by the inner-product convention of Appendix~\ref{sec:app_method}. Per-mode maps commute with zero-padding on band-limited inputs; per-point maps commute with band-limited interpolation exactly when their outputs are resolved, which the band hypothesis guarantees; the encoder and the coefficient network are neural operators that are discretization-convergent by construction for the FNO backbone, and are excluded from the claim for the convolutional backbone and for the Transolver, whose local-mixing kernels are defined in grid units. The measured invariance to four digits in \S\ref{sec:integrator_study} is the empirical content.
\end{proof}

\end{document}